\documentclass{article} 
\usepackage{iclr2016_conference,times}
\usepackage{hyperref}
\usepackage{url}
\usepackage{times}
\usepackage{helvet}
\usepackage{courier}
\usepackage{epstopdf}
\usepackage{amsmath}
\usepackage{amsthm}
\usepackage{amssymb}
\usepackage{braket}
\usepackage{mathtools}
\usepackage{enumitem}
\usepackage{graphicx}
\usepackage{natbib}
\usepackage{mathrsfs}
\usepackage{caption}
\usepackage{subcaption}
\usepackage[english]{babel}
\input cyracc.def
\font\tencyr=wncyr10
\def\cyr{\tencyr\cyracc}

\title{Data Representation and Compression Using Linear-Programming Approximations}

\author{
Hristo S. Paskov \\
Computer Science Department\\
Stanford University\\
\texttt{hpaskov@stanford.edu}
\And
John C. Mitchell \\
Computer Science Department\\
Stanford University\\
\texttt{jcm@stanford.edu}
\And
Trevor J. Hastie \\
Statistics Department\\
Stanford University\\
\texttt{hastie@stanford.edu}
}

%

\newtheorem{prop}{Proposition}
\newtheorem{thm}{Theorem}
\newtheorem{lem}{Lemma}

\DeclareMathOperator*{\minimize}{minimize}
\newcommand{\ones}{\mathbf{1}}
\newcommand{\zeros}{\mathbf{0}}

\newcommand{\mydim}{\text{dim}\,}
\newcommand{\myint}{\text{int}\,}
\newcommand{\myrelint}{\text{relint}\,}
\newcommand{\mysup}{\text{sup}\,}
\newcommand{\myinf}{\text{inf}\,}

\iclrfinalcopy 

\begin{document}

\maketitle

\begin{abstract}
We propose `Dracula', a new framework for unsupervised feature selection from sequential data such as text. Dracula learns a dictionary of $n$-grams that efficiently compresses a given corpus and recursively compresses its own dictionary; in effect, Dracula is a `deep' extension of Compressive Feature Learning. It requires solving a binary linear program that may be relaxed to a linear program. Both problems exhibit considerable structure, their solution paths are well behaved, and we identify parameters which control the depth and diversity of the dictionary. We also discuss how to derive features from the compressed documents and show that while certain unregularized linear models are invariant to the structure of the compressed dictionary, this structure may be used to regularize learning. Experiments are presented that demonstrate the efficacy of Dracula's features.
\end{abstract}

\section{Introduction}
\label{sec:intro}
At the core of any successful machine learning problem is a good feature representation that highlights salient properties in the data and acts as an effective interface to the statistical model used for inference. This paper focuses on using classical ideas from compression to derive useful feature representations for sequential data such as text. The basic tenets of compression abound in machine learning: the minimum description length principle can be used to justify regularization as well as various model selection criteria (\cite{gabrilovich2004text}), while for unsupervised problems deep autoencoders (\cite{Salakhutdinov09learningdeep}) and the classical K-means algorithm both seek a parsimonious description of data. Meanwhile, off-the-shelf compressors, such as LZ-77 (\cite{ziv1977universal}), have been successfully applied to natural language problems as kernels that compute pairwise document similarities (\cite{bratko2006spam}).

We propose a new framework, \emph{Dracula}, so called because it simultaneously finds a useful \emph{data representation and compression using linear-programming approximations} of the criterion that motivates dictionary-based compressors like LZ-77 (\cite{ziv1977universal}). Dracula finds an \emph{explicit} feature representation for the documents in a corpus by learning a dictionary of $n$-grams that is used to losslessly compress the corpus. It then recursively compresses the dictionary. This recursion makes Dracula a deep extension of Compressive Feature Learning (CFL) (\cite{paskovCFL}) that can find \emph{exponentially} smaller representations and promotes similar $n$-grams to enter the dictionary. As noted in (\cite{paskovCFL}), feature representations derived from off-the-shelf compressors are inferior because the algorithms used are sensitive to document order; both Dracula and CFL are invariant to document order.

Our framework is expressed as a binary linear program (BLP) that can viewed as a linear program (LP) over a sufficiently constrained polyhedron or relaxed to an LP by relaxing the integrality constraints. This is a notable departure from traditional deep learners (\cite{Salakhutdinov09learningdeep,DBLP:conf/naacl/SocherM13,lecun-06}), which are formulated as non-convex, non-linear optimization problems. This structure makes it possible to analyze Dracula in view of well known techniques from convex analysis (e.g. the KKT conditions), polyhedral combinatorics, and graph theory. For example, we show that Dracula is easily parameterized to control the depth and diversity of its dictionary and that its solutions are well behaved as its parameters vary.

This paper introduces Dracula in section \ref{sec:drac} and discusses some of its problem structure and computational properties, including its NP-Completeness. Section \ref{sec:feats} uses Dracula's polyhedral interpretation to explore the compressed representations it finds as its storage cost model varies. It also discusses how to extract features directly from a compression and how to integrate dictionary structure into the features. Finally, section \ref{sec:exp} provides empirical evidence that deep compression finds hierarchical structure in data that is useful for learning and compression, and section \ref{sec:conc} concludes.

\section{DRaCULA}\label{sec:drac}
This section introduces Dracula by showing how to extend CFL to a deep architecture that compresses its own dictionary elements. We also show how to interpret any Dracula solution as a directed acyclic graph (DAG) that makes precise the notion of depth and provides useful statistical insights. Finally, we prove that Dracula is NP-Complete and discuss linear relaxation schemes.

\paragraph{Notation} Throughout this paper $\Sigma$ is a fixed finite alphabet and $\mathcal{C}=\{D_1,\dots,D_N\}$ is a fixed document corpus with each $D_k \in \mathcal{C}$ a string $D_k=c^k_1\dots c^k_j$ of characters $c_i^k \in \Sigma$. An $n$-gram is any substring of some $D_k$ and $\mathcal{S}$ is the set of all $n$-grams in the document corpus, including the original documents. For any $s\in\mathcal{S}$ a \emph{pointer} $p$ is a triple $p=(s,l\in\{1,\dots,|s|\},z\in\mathcal{S})$ indicating that $z=s_{l}\dots s_{l + |z| - 1}$. We say that $p$ \emph{uses} $z$ at location $l$ in $s$. Let $\mathcal{P}$ be the set of all valid pointers and for any $P\subset\mathcal{P}$ we use $P(s) = \left\{p\in P \middle\vert p = (s, l,z)\right\}$ to select pointers whose first element is $s$, e.g. $\mathcal{P} = \cup_{s \in \mathcal{S}} \mathcal{P}(s)$. Moreover, $P$ \emph{uses} $z\in\mathcal{S}$ if there is some $p\in P$ using $z$, and $P$ \emph{reconstructs} $s\in\mathcal{S}$ if every location in $s$ is covered by at least one pointer, i.e. $\cup_{(s,l,v)\in P(s)}\{l,\dots,l+|v|-1\}=\{1,\dots,|s|\}$. Conceptually, $s$ is recovered from $P$ by iterating through the $(s,l,v)\in P$ and "pasting" a copy of $v$ into location $l$ of a blank string. It will be helpful to define $\mathcal{P}_{\mathcal{C}} = \cup_{s \in \mathcal{C}} \mathcal{P}(s)$ to be the set of pointers that can only be used to reconstruct the corpus.

\subsection{CFL}\label{sec:cfl}

CFL represents document corpus $\mathcal{C}$ by storing a \emph{dictionary} $S\subset \mathcal{S}$, a set of $n$-grams, along with a \emph{pointer set} $P\subset \mathcal{P}_{\mathcal{C}}$ that only uses dictionary $n$-grams and losslessly reconstructs each of the documents in $\mathcal{C}$. Importantly, CFL stores the dictionary directly in plaintext. The overall representation is chosen to minimize its total storage cost for a given storage cost model that specifies $d_s$, the cost of including $n$-gram $s\in \mathcal{S}$ in the dictionary, as well as $c_p$, the cost of including pointer $p\in\mathcal{P}_C$ in the pointer set. Selecting an optimal CFL representation may thus be expressed as
\begin{equation}
\begin{aligned}
\underset{S \subset \mathcal{S}, P \subset \mathcal{P}_{\mathcal{C}}}{\text{minimize}}
& \sum_{p \in P} c_p + \sum_{s \in S} d_s
&& \text{subject to}
& P \ \text{reconstructs}\ D_k\ \forall D_k \in \mathcal{C};\ P \ \text{only uses} \ s\in S.
\end{aligned}
\end{equation}
This optimization problem naturally decomposes into subproblems by observing that when the dictionary is fixed, selecting the optimal pointer set decouples into $|\mathcal{C}|$ separate problems of optimally reconstructing each corpus document. We thus define the \emph{reconstruction module} for document $D_k\in \mathcal{C}$, which takes as input a dictionary $S$ and outputs the minimum cost of reconstructing $D_k$ with pointers that only use strings in $S$. Note that specific pointers and dictionary strings can be disallowed by setting their respective costs to $\infty$. For example setting $d_s = \infty$ for all $s\in\mathcal{S}$ longer than a certain length limits the size of dictionary $n$-grams. Of course, in practice, any variables with infinite costs are simply disregarded.

The reconstruction module can be expressed as a BLP by associating with every pointer $p \in \mathcal{P}(D_k)$ a binary indicator variable $w_p\in\{0,1\}$ whereby $w_p=1$ indicates that $p$ is included in the optimal pointer set for $D_k$. We similarly use binary variables $t_s \in \{0,1\}$ to indicate that $s \in \mathcal{S}$ is included in the dictionary. Since there is a one-to-one correspondence between pointer sets (dictionaries) and $w \in \{0,1\}^{|\mathcal{P}(D_k)|}$ ($t \in \{0,1\}^{|\mathcal{S}}|$), the vector storing the $w_p$ ($t_s$), we will directly refer to these vectors as pointer sets (dictionaries). Lossless reconstruction is encoded by the constraint $X^{D_k}w \geq \mathbf{1}$ where $X^{D_k} \in \{0,1\}^{|D_k|\times |\mathcal{P}(D_k)|}$ is a binary matrix indicating the indices of ${D_k}$ that each pointer can reconstruct. In particular, for every $p=({D_k},l,z) \in \mathcal{P}({D_k})$, column $X^{D_k}_p$ is all zeros except for a contiguous sequence of $1$'s in indices $l,\dots,l+|z| - 1$. Control of which pointers may be used (based on the dictionary) is achieved by the constraint $w \leq V^{D_k}t$ where $V^{D_k} \in \{0,1\}^{|\mathcal{P}({D_k})| \times |\mathcal{S}|}$ contains a row for every pointer indicating the string it uses. In particular, for every $p=({D_k},l,z)$, $V^{D_k}_{p,z} = 1$ is the only non-zero entry in the row pertaining to $p$. The BLP may now be expressed as
\begin{equation}\label{eq:doc_recon}
\begin{aligned}
R_{D_k}(t;c) = & \underset{w \in \{0,1\}^{|\mathcal{P}({D_k})|}}{\text{minimize}}
& \sum_{p \in \mathcal{P}({D_k})} w_pc_p 
&& \text{subject to}
&& X^{D_k}w \geq \mathbf{1}; \, w \leq V^{D_k} t.
\end{aligned}
\end{equation}

The optimization problem corresponding to an optimal CFL representation may now be written as a BLP by sharing the dictionary variable $t$ among the reconstruction modules for all documents in $\mathcal{C}$:
\begin{equation}\label{eq:CFL}
\begin{aligned}
\underset{t \in \{0,1\}^{|\mathcal{S}|}}{\text{minimize}}
& \sum_{{D_k} \in \mathcal{C}} R_{D_k}(t, c) + \sum_{s \in \mathcal{S}} t_sd_s
\end{aligned}
\end{equation}

\subsection{Adding Depth with DRaCULA}\label{sec:cfl_to_drac}
The simplicity of CFL's dictionary storage scheme is a fundamental shortcoming that is demonstrated by the string $aa\dots a$ consisting of the character $a$ replicated $2^{2n}$ times. Let the cost of using any pointer be $c_p = 1$ and the cost of storing any dictionary $n$-gram be its length, i.e. $d_s = |s|$. The best CFL can do is to store a single dictionary element of length $2^n$ and repeat it $2^n$ times, incurring a total storage cost of $2^{n+1}$. In contrast, a ``deep" compression scheme that recursively compresses its own dictionary by allowing dictionary strings to be represented using pointers attains \emph{exponential} space savings relative to CFL. In particular, the deep scheme constructs dictionary strings of length $2,4,\dots,2^{2n-1}$ recursively and incurs a total storage cost of $4n$ \footnote{Note that the recursive model is allowed to use pointers in the dictionary and therefore selects from a larger pointer set than CFL. Care must be taken to ensure that the comparison is fair since the ``size" of a compression is determing by the storage cost model and we could ``cheat" by setting all dictionary pointer costs to 0. Setting all pointer costs to $1$ ensures fairness. }.

Dracula extends CFL precisely in this hierarchical manner by allowing dictionary strings to be expressed as a combination of characters and pointers from shorter dictionary strings. CFL thus corresponds to a shallow special case of Dracula which only uses characters to reconstruct dictionary $n$-grams. This depth allows Dracula to leverage similarities among the dictionary strings to obtain further compression of the data. It also establishes a hierarchy among dictionary strings that allows us to interpret Dracula's representations as a directed acyclic graph (DAG) that makes precise the notion of representation depth.

Formally, a Dracula compression (compression for brevity) of corpus $\mathcal{C}$ is a triple $\mathfrak{D}=(S\subset\mathcal{S}, P\subset\mathcal{P}_{\mathcal{C}}, \hat{P}\subset\mathcal{P})$ consisting of dictionary, a pointer set $P$ that reconstructs the documents in $\mathcal{C}$, and a pointer set $\hat{P}$ that reconstructs every dictionary string in $S$. As with CFL, any pointers in $P$ may only use strings in $S$. However, a pointer $p\in \hat{P}$ reconstructing a dictionary string $s\in\mathcal{S}$ is valid if it uses a unigram (irrespective of whether the unigram is in $S$) or a \emph{proper substring} of $s$ that is in $S$. This is necessary because unigrams take on the special role of characters for dictionary strings. They are the atomic units of any dictionary, so the character set $\Sigma$ is assumed to be globally known for dictionary reconstruction. In contrast, document pointers are not allowed to use characters and may only use a unigram if it is present in $S$; this ensures that all strings used to reconstruct the corpus are included in the dictionary for use as features.

Finding an optimal Dracula representation may also be expressed as a BLP through simple modifications of CFL's objective function. In essence, the potential dictionary strings in $\mathcal{S}$ are treated like documents that only need to be reconstructed if they are used by some pointer. We extend the storage cost model to specify costs $c_p$ for all pointers $p\in\mathcal{P}_{\mathcal{C}}$ used for document reconstruction as well as costs $\hat{c}_p$ for all pointers $p\in\mathcal{P}$ used for dictionary reconstruction. In keeping with the aformentioned restrictions we assume that $\hat{c}_p = \infty$ if $p = (s,1,s)$ illegally tries to use $s$ to reconstruct $s$ and $s$ is not a unigram. The dictionary cost $d_s$ is now interpreted as the ``overhead" cost of including $s\in\mathcal{S}$ in the dictionary without regard to how it is reconstructed; CFL uses the $d_s$ to also encode the cost of storing $s$ in plaintext (e.g. reconstructing it only with characters). Finally, we introduce dictionary reconstruction modules as analogs to the (document) reconstruction modules for dictionary strings: the reconstruction module for $s \in \mathcal{S}$ takes as input a dictionary and outputs the cheapest valid reconstruction of $s$ if $s$ needs to be reconstructed. This can be written as the BLP
\begin{equation}\label{eq:dict_recon}
\begin{aligned}
\hat{R}_s(t;\hat{c}) = & \underset{w \in \{0,1\}^{|\mathcal{P}(s)|}}{\text{minimize}}
& \sum_{p \in \mathcal{P}(s)} w_p\hat{c}_p 
&& \text{subject to}
&& X^sw \geq t_s\mathbf{1}; \, w \leq \hat{V}^s t.
\end{aligned}
\end{equation}
Here $X^s$ is analogously defined as in equation (\ref{eq:dict_recon}) and $\hat{V}^s$ is analogous to $V^s$ in equation (\ref{eq:dict_recon}) except that it does not contain any rows for unigram pointers. With this setup in mind, the optimization problem corresponding to an optimal Dracula representation may be written as the BLP
\begin{equation}\label{eq:dracula}
\begin{aligned}
\underset{t \in \{0,1\}^{|\mathcal{S}|}}{\text{minimize}}
& \sum_{D_k \in \mathcal{C}} R_{D_k}(t, c) + \sum_{s \in \mathcal{S}} \left[t_s d_s+ \hat{R}_s(t;\hat{c})\right]
\end{aligned}
\end{equation}

Finally, any compression can be interpreted graphically as, and is equivalent to, a DAG whose vertices correspond to members of $\Sigma$, $S$, or $\mathcal{C}$ and whose labeled edge set is determined by the pointers: for every $(s,l,z)\in P$ or $\hat{P}$ there is a directed edge from $z$ to $s$ with label $l$. Note that $\mathfrak{D}$ defines a multi-graph since there may be multiple edges between nodes. Figure \ref{fig:dracula} shows the graph corresponding to a simple compression. As this graph encodes all of the information stored by $\mathfrak{D}$, and vice versa, we will at times treat $\mathfrak{D}$ directly as a graph. Since $\mathfrak{D}$ has no cycles, we can organize its vertices into layers akin to those formed by deep neural networks and with connections determined by the pointer set: layer $0$ consists only of characters (i.e. there is a node for every character in $\Sigma$), layer $1$ consists of all dictionary $n$-grams constructed solely from characters, higher levels pertain to longer dictionary $n$-grams, and the highest level consists of the document corpus $\mathcal{C}$. While there are multiple ways to organize the intermediate layers, a simple stratification is obtained by placing $s\in S$ into layer $i$ only if $\hat{P}(s)$ uses a string in layer $i-1$ and no strings in layers $i+1,\dots$. We note that our architecture differs from most conventional deep learning architectures which tend to focus on pairwise layer connections -- we allow arbitrary connections to higher layers.

\begin{figure}[h]
\centering{\includegraphics[scale=.3]{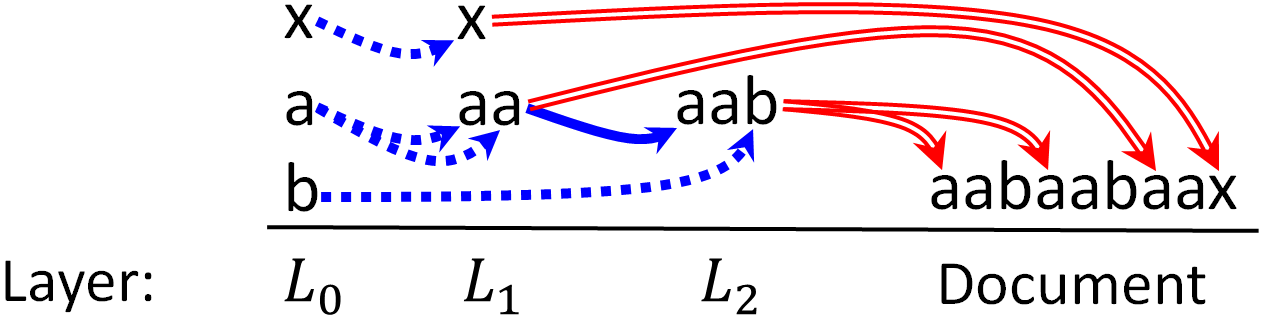}}
\caption{Compression of ``aabaabaax" using a $3$-layered dictionary. Layer 0 consists of characters; layers 1 and 2 are dictionary n-grams. There are three kinds of pointers: character to dictionary $n$-gram (dashed blue lines), dictionary $n$-gram to (longer) dictionary $n$-gram (solid blue line), and dictionary $n$-gram to document (double red lines).
} \label{fig:dracula}
\end{figure}

\subsection{Computational Hardness and Relaxation}\label{sec:hardness}

The document and dictionary reconstruction modules $R_{D_k}/\hat{R}_s$ are the basic building blocks of Dracula; when dictionary $t$ is fixed, solving equation (\ref{eq:dracula}) is tantamount to solving the reconstruction modules \emph{separately}. The discussion in the Appendix section \ref{app:recon} shows that for a fixed binary $t$, solving $R_{D_k}$ or $\hat{R}_s$ is \emph{easy} because of the structure of the constraint matrices $X^{D_k}/X^s$. In fact, this problem is equivalent to a min-cost flow problem. Similarly, if the pointer sets are known for each document or dictionary string then it is easy to find the corresponding dictionary $t$ by checking which strings are used (in linear time relative to the number of pointers). One would hope that the easiness of Dracula's subproblems leads to an easy overall learning problem. However, learning the dictionary and pointer sets simultaneously makes this problem hard: Dracula is NP-Complete. In particular, it requires solving a binary LP (which are NP-Complete in general) and it generalizes CFL which is itself NP-Complete (\cite{paskovCFL2}) (see section \ref{sec:landmarks} for how to restrict representations to be shallow). 

We thus turn to solving Dracula approximately via its LP relaxation. This is obtained by replacing all binary constraints in equations (\ref{eq:doc_recon}),(\ref{eq:dict_recon}),(\ref{eq:dracula}) with interval constraints $[0,1]$. We let $\mathcal{Q}_C$ denote this LP's constraint polyhedron and note that it is a subset of the unit hypercube. Importantly, we may also interpret the original problem in equation (\ref{eq:dracula}) as an LP over a polyhedron $\mathcal{Q}$ whose vertices are always binary and hence always has binary basic solutions. Here $\mathcal{Q}$\footnote{Note that unlike $\mathcal{Q}_C$, this polyhedron is likely to be difficult to describe succinctly unless $P = NP$.} is the convex hull of all (binary) Dracula solutions and $\mathcal{Q}\subset \mathcal{Q}_C$; all valid Dracula solutions may be obtained from the linear relaxation. In fact, the Chv\'{a}tal-Gomory theorem (\cite{schrijver-book}) shows that we may ``prune" $\mathcal{Q}_C$ into $\mathcal{Q}$ by adding additional constraints. We describe additional constraints in the Appendix section \ref{app:refine} that leverage insights from suffix trees to prune $\mathcal{Q}_C$ into a tighter approximation $\mathcal{Q}_C' \subset\mathcal{Q}_C$ of $\mathcal{Q}$. Remarkably, when applied to natural language data, these constraints allowed Gurobi (\cite{gurobi}) to quickly find \emph{optimal binary solutions}. While we did not use these binary solutions in our learning experiments, they warrant further investigation.

As the pointer and dictionary costs vary, the resulting problems will vary in difficulty as measured by the gap between the objectives of the LP and binary solutions. When the costs force either $t$ or the $w^{D_k}/w^s$ to be binary, our earlier reasoning shows that the entire solution will lie on a binary vertex of $\mathcal{Q}_C$ that is necessarily optimal for the corresponding BLP and the gap will be $0$. This reasoning also shows how to round any continuous solution into a binary one by leveraging the easiness of the individual subproblems. First set all non-zero entries in $t$ to $1$, then reconstruct the documents and dictionary using this dictionary to yield binary pointers, and finally find the minimum cost dictionary based on which strings are used in the pointers.

\section{Learning with Compressed Features}\label{sec:feats}
This section explores the feature representations and compressions that can be obtained from Dracula. Central to our discussion is the observation of section \ref{sec:hardness} that all compressions obtained from Dracula are the vertices of a polyhedron. Each of these vertices can be obtained as the optimal compression for an appropriate storage cost model\footnote{The storage costs pertaining to each vertex form a polyhedral cone, see (\cite{ziegler1995lectures}) for details.}, so we take a dual perspective in which we vary the storage costs to characterize which vertices exist and how they relate to one another. The first part of this section shows how to ``walk'' around the surface of Dracula's polyhedron and it highlights some ``landmark'' compressions that are encountered, including ones that lead to classical bag-of-$n$-grams features. Our discussion applies to both, the binary and relaxed, versions of Dracula since the former can viewed as an LP over a polyhedron $\mathcal{Q}$ with only binary vertices. The second part of this section shows how to incorporate dictionary structure into features via a dictionary diffusion process.

We derive features from a compression in a bag-of-$n$-grams (BoN) manner  by counting the number of pointers that use each dictionary string or character. It will be useful to explicitly distinguish between strings and characters when computing our representations and we will use squares brackets to denote the character inside a unigram, e.g. $[c]$ . Recall that given a compression $\mathfrak{D}=(S,P,\hat{P})$, a unigram pointer in $P$ (used to reconstruct a document) is interpreted as a string whereas a unigram pointer in $\hat{P}$ is interpreted as a character. We refer to any $z\in S \cup \Sigma$ as a feature and associate with every document $D_k\in\mathcal{C}$ or dictionary string $s\in S$ a BoN feature vector $x^{D_k},x^s\in\mathbb{Z}^{|S| + |\Sigma|}_+$, respectively. Entry $x^{D_k}_z$ counts the number of pointers that use $z$ to reconstruct $D_k$, i.e. $x^{D_k}_z = |\left\{p\in P(s) \middle\vert \, p = (D_k,l,z)\right\}|$, and will necessarily have $x^{D_k}_z = 0$ for all $z\in\Sigma$. Dictionary strings are treated analogously with the caveat that if $p=(s,l,z)\in\hat{P}$ uses a unigram, $p$ counts towards the character entry $x^{D_k}_{[z]}$, not $x^{D_k}_{z}$.

\subsection{Dracula's Solution Path}\label{sec:path}
Exploring Dracula's compressions is tantamount to varying the dictionary and pointer costs supplied to Dracula. When these costs can be expressed as continuous functions of a parameter $\lambda\in[0,1]$, i.e. $\forall s\in\mathcal{S}, p \in \mathcal{P}_{\mathcal{C}}, \hat{p} \in \mathcal{P}$ the cost functions $d_s(\lambda), c_p(\lambda), \hat{c}_{\hat{p}}(\lambda)$ are continuous, the optimal solution sets vary in a predictable manner around the surface of Dracula's constraint polyhedron $\mathcal{Q}$ or the polyhedron of its relaxation $\mathcal{Q}_C$. We use $\mathscr{F}(Q)$ to denote the set of faces of polyhedron $Q$ (including $Q$), and take the dimension of a face to be the dimension of its affine hull. We prove the following theorem in the Appendix section \ref{app:path}:
\begin{thm}\label{thm:path}
Let $Q\subset \mathbb{R}^d$ be a bounded polyhedron with nonempty interior and $b:[0,1]\rightarrow\mathbb{R}^d$ a continuous function. Then for some $N\in\mathbb{Z}_+\cup \{\infty\}$ there exists a countable partition $\Gamma = \left\{\gamma_i\right\}_{i = 0}^{N}$ of $[0,1]$ with corresponding faces $F_i\in\mathscr{F}(Q)$ satisfying $F_i \neq F_{i+1}$ and  $F_{i}\cap F_{i+1} \neq \emptyset$. For all $\alpha \in \gamma_i$, the solution set of the LP constrained by $Q$ and using cost vector $b(\alpha)$ is $F_i = \text{arg min}_{x \in Q}\, x^Tb(\alpha)$. Moreover, $F_i$ never has the same dimension as $F_{i+1}$ and the boundary between $\gamma_i,\gamma_{i+1}$ is $)[$ iff $\textnormal{dim}\,F_i < \textnormal{dim}\,F_{i+1}$ and $]($ otherwise.
\end{thm}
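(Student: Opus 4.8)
The plan is to identify the optimal-face map $\alpha\mapsto F(\alpha):=\arg\min_{x\in Q}x^{T}b(\alpha)$ with the pull-back along $b$ of the \emph{normal fan} of $Q$, and then read every claim off the combinatorics of that fan together with continuity of $b$ and closedness of its cones. Since $Q$ is bounded, $\arg\min_{x\in Q}x^{T}c$ is a nonempty face for every $c\in\mathbb{R}^{d}$, so for each $F\in\mathscr{F}(Q)$ the normal cone $N_{F}:=\{c:F\subseteq\arg\min_{x\in Q}x^{T}c\}$ is a nonempty closed polyhedral cone whose relative interior equals $\{c:\arg\min_{x\in Q}x^{T}c=F\}$, and these relative interiors partition $\mathbb{R}^{d}$ (standard; see \cite{ziegler1995lectures}). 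I will use two structural facts about this fan: (i) the order-reversing bijection $F\subseteq F'\iff N_{F}\supseteq N_{F'}$, under which the faces of the cone $N_{F}$ are exactly the $N_{F'}$ with $F'\supseteq F$; and (ii) because $Q$ has nonempty interior, $\operatorname{aff}Q=\mathbb{R}^{d}$ and hence $\dim N_{F}=d-\dim F$, so a \emph{proper} inclusion of faces is always a strict, dimension-dropping inclusion of cones.

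Pulling back, the sets $b^{-1}(\operatorname{relint}N_{F})$ partition $[0,1]$ into at most $|\mathscr{F}(Q)|$ pieces; I would then pass to the maximal subintervals on which $F(\cdot)$ is constant and enumerate them left to right along $[0,1]$ to obtain $\Gamma=\{\gamma_{i}\}_{i=0}^{N}$ with associated faces $F_{i}$, where $F_{i}\neq F_{i+1}$ by maximality. Countability comes from the observation that for a vertex $v$ the level set $\{\alpha:F(\alpha)=v\}=b^{-1}(\operatorname{int}N_{v})$ is relatively open, hence a countable union of intervals, and the higher-dimensional pieces are squeezed between these; for the affine (or piecewise-analytic) cost paths that actually arise from the paper's cost models the partition is in fact \emph{finite}, obtained by cutting the arc $b([0,1])$ with the finitely many walls of the fan.

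The heart of the argument will be the analysis of a shared endpoint $\alpha^{\ast}$ of consecutive intervals $\gamma_{i},\gamma_{i+1}$. Because $\Gamma$ is a partition, $\alpha^{\ast}$ lies in exactly one of them. Letting $\alpha\to\alpha^{\ast}$ through $\gamma_{i}$ and using continuity of $b$ together with closedness of $N_{F_{i}}$ gives $b(\alpha^{\ast})\in N_{F_{i}}$, i.e.\ $F_{i}\subseteq F(\alpha^{\ast})$; symmetrically $F_{i+1}\subseteq F(\alpha^{\ast})$. If $\alpha^{\ast}\in\gamma_{i+1}$ then $F(\alpha^{\ast})=F_{i+1}$, so $F_{i}\subseteq F_{i+1}$; since the two faces are distinct the inclusion is proper, whence $\dim F_{i}<\dim F_{i+1}$, $F_{i}\cap F_{i+1}=F_{i}\neq\emptyset$, and $\gamma_{i}$ misses $\alpha^{\ast}$ while $\gamma_{i+1}$ contains it --- precisely the boundary type $)[\,$. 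The case $\alpha^{\ast}\in\gamma_{i}$ is symmetric and yields $F_{i+1}\subsetneq F_{i}$, $\dim F_{i+1}<\dim F_{i}$, and type $](\,$. In particular consecutive faces never share a dimension and always meet in the smaller of the two, which is everything that remains to be shown.

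The step I expect to be the main obstacle is not this endpoint analysis, which is essentially forced, but establishing cleanly that $F(\cdot)$ is piecewise constant with the pieces forming a well-ordered sequence of adjacent intervals: a merely continuous $b$ can make the constancy pieces accumulate, so one must either argue that any accumulation point is itself a legitimate piece of the enumeration (it lies on a wall of the fan, so $F(\cdot)$ takes a higher-dimensional value there) and that countably many such pieces suffice, or --- as the applications permit --- restrict to cost paths regular enough that the partition is finite. A subsidiary point to handle with care is the repeated use of boundedness and nonempty interior of $Q$: the former guarantees $\arg\min$ is always a nonempty face, and the latter gives $\dim N_{F}=d-\dim F$, which is what turns ``$F_{i}\neq F_{i+1}$'' into a genuine dimension jump.
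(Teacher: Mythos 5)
Your overall route is the same as the paper's: both arguments pull back the normal fan of $Q$ along the cost path and read everything off its combinatorics. Your endpoint analysis matches the paper's and is correct: at a shared boundary point $\alpha^{*}$ of $\gamma_i,\gamma_{i+1}$, closedness of the normal cones plus continuity of $b$ forces the cone of $F(\alpha^{*})$ to be contained in the cone attached to whichever interval does \emph{not} contain $\alpha^{*}$, and the order-reversing correspondence together with $\dim N_F = d - \dim F$ (valid because $Q$ has nonempty interior) gives the strict dimension jump and the $)[$ versus $]($ dichotomy.

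The genuine gap is the countability of the partition for a merely continuous $b$, which you correctly identify as the main obstacle but do not close. Your sketch --- the vertex-optimal level sets are open, ``and the higher-dimensional pieces are squeezed between these'' --- does not suffice: the complement in $[0,1]$ of a countable union of open intervals can be uncountable (a Cantor set), so a priori the higher-dimensional faces could be optimal on uncountably many singleton constancy pieces. Your fallback of restricting to piecewise-analytic cost paths would prove a strictly weaker statement than the theorem, which is asserted for arbitrary continuous $b$. The paper closes the gap with a downward induction on $k = d, d-1, \dots, 0$ over the sets $\omega_k = \left\{x \in [0,1] \mid \dim N_{f(x)} \geq k\right\}$, using a neighborliness property of the fan (Lemma \ref{lem:nbr}): for $c$ in the relative interior of a cone $N$, a small ball around $c$ meets only relative interiors of cones \emph{containing} $N$. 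Hence if the constancy set of some $x$ with $\dim N_{f(x)} = k$ is a singleton, a punctured neighborhood of $x$ lands in $\omega_{k+1}$, so such singletons are isolated within $\omega_k \setminus \omega_{k+1}$ and therefore countable; constancy sets with nonempty interior each contain a rational and are countable for the usual reason. That isolation-by-dimension argument is the missing idea you would need to supply to make your enumeration of maximal constancy intervals legitimate.
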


This theorem generalizes the notion of a continuous solution path typically seen in the context of regularization (e.g. the Lasso) to the LP setting where unique solutions are piecewise constant and transitions occur by going through values of $\lambda$ for which the solution set is not unique. For instance, suppose that vertex $v_0$ is uniquely optimal for some $\lambda_0 \in [0,1)$, another vertex $v_1$ is uniquely optimal for a $\lambda_0 < \lambda_1 \leq 1$, and no other vertices are optimal in $(\lambda_0,\lambda_1)$. Then Theorem \ref{thm:path} shows that $v_0$ and $v_1$ must be connected by a face (typically an edge) and there must be some $\lambda \in (\lambda_0,\lambda_1)$ for which this face is optimal. As such, varying Dracula's cost function continuously ensures that the solution set for the binary or relaxed problem will not suddenly ``jump" from one vertex to the next; it must go through an intermediary connecting face. This behavior is depicted in Figure \ref{fig:Poly} on a nonlinear projection of Dracula's constraint polyhedron for the string ``xaxabxabxacxac''.

\begin{figure}
    \centering
    \begin{subfigure}[b]{0.4\textwidth}
        \includegraphics[scale=0.5]{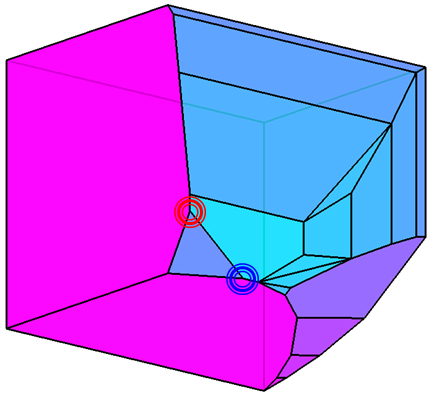}
        \caption{}
        \label{fig:primal}
    \end{subfigure}
\;\;\;\;\;
    \begin{subfigure}[b]{0.4\textwidth}
        \includegraphics[scale=0.5]{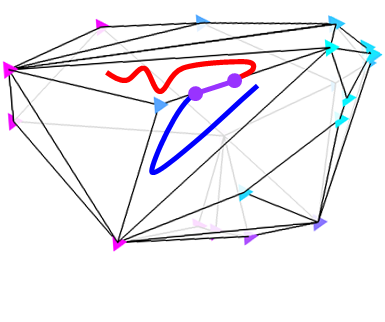}
        \caption{}
        \label{fig:dual}
    \end{subfigure}
    \begin{subfigure}[b]{\textwidth}
    \centering
        \includegraphics[scale=0.5]{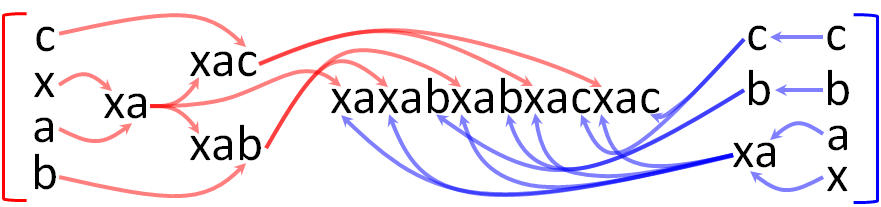}
        \caption{}
	 \label{fig:sols}
    \end{subfigure}
    \caption{Part (a) shows a nonlinear projection of a subset of Dracula's constraint polyhedron $\mathcal{Q}$ in which every vertex corresponds to a distinct compression of ``xaxabxabxacxac''. Part (b) is the projection's polar; its faces delineate the (linear) costs for which each vertex in (a) is optimal. The red/ purple/ blue line in (b) demonstrates a continuous family of costs. All red (blue) costs are uniquely minimized by the vertex in (a) highlighted in red (blue), respectively; (c) shows the corresponding compressions. Purple costs lie on the edge between the faces containing the red and blue lines and are minimized by any convex combination of the vertices highlighted in (a).}
\label{fig:Poly}
\end{figure}

It is worthwhile to note that determining the exact value of $\lambda$ for which the face connecting $v_0$ and $v_1$ is optimal is unrealistic in practice, so transitions may appear abrupt. While it is possible to smooth this behavior by adding a strongly convex term to the objective (e.g. an $L_2$ penalty), the important insight provided by this theorem is that the trajectory of the solution path depends entirely on the \emph{combinatorial structure} of $\mathcal{Q}$ or $\mathcal{Q}_C$. This structure is characterized by the face lattice\footnote{We leave it as an open problem to analytically characterize Dracula's face lattice.
} of the polyhedron and it shows which vertices are connected via edges, $2$-faces, \dots, facets. It limits, for example, the set of vertices reachable from $v_0$ when the costs vary continuously and ensure that transitions take place only along edges \footnote{Restricting transitions only to edges is possible with probability  1 by adding a small amount of Gaussian noise to $c$.}. This predictable behavior is desirable when fine tuning the compression for a learning task, akin to how one might tune the regularization parameter of a Lasso, and it is not possible to show in general for non-convex functions.

We now provide a simple linear cost scheme that has globally predictable effects on the dictionary. For all $s\in\mathcal{S}, p \in \mathcal{P}_{\mathcal{C}}, \hat{p} \in \mathcal{P}$ we set $d_s=\tau$, $c_p =1$, $\hat{c}_{\hat{p}}=\alpha\lambda$ if $\hat{p}$ uses as unigram (i.e. is a character), and $\hat{c}_{\hat{p}}=\lambda$ otherwise. We constrain $\tau,\lambda \geq 0$ and $\alpha\in[0,1]$. In words, all document pointer costs are $1$, all dictionary costs $\tau$, and dictionary pointer costs are $\lambda$ if they use a string and $\alpha\lambda$ if they use a character. The effects these parameters have on the compression may be understood by varying a single parameter and holding all others constant:
\begin{description}[style=unboxed,leftmargin=0cm]
\item [Varying $\tau$] controls the minimum frequency with which $s\in\mathcal{S}$ must be used before it enters the dictionary; if few pointers use $s$ it is cheaper to construct $s$ ``in place" using shorter $n$-grams. Long $n$-grams appear less frequently so $\uparrow \tau$ biases the dictionary towards shorter $n$-grams.
\item [Varying $\lambda$] has a similar effect to $\tau$ in that it becomes more expensive to construct $s$ as $\lambda$ increases, so the overall cost of dictionary membership increases. The effect is more nuanced, however, since the \emph{manner} in which $s$ is constructed also matters; $s$ is more likely to enter the dictionary if it shares long substrings with existing dictionary strings. This suggests a kind of grouping effect whereby groups of strings that share many substrings are likely to enter together.
\item [Varying $\alpha$] controls the Dracula's propensity to use characters in place of pointers in the dictionary and thereby directly modulates dictionary depth. When $\alpha < \frac{1}{K}$ for $K=2,3,\dots$, all dictionary $n$-grams of length at most $K$ are constructed entirely from characters.
\end{description}
\subsubsection{Landmarks on Dracula's Polyhedron} \label{sec:landmarks}
While Dracula's representations are typically deep and space saving, it is important to note that valid Dracula solutions include all of CFL's solutions as well as a set of fully redundant representations that use as many pointers as possible. The BoN features computed from these ``space maximizing'' compressions yield the traditional BoN features containing all $n$-grams up to a maximum length $K$. A cost scheme that includes all pointers using all $n$-grams up to length $K$ is obtained by setting all costs to be \emph{negative}, except for $t_s = \infty$ for all $s \in \mathcal{S}$ where $|s|> K$ (to disallow these strings). The optimal compression then includes all pointers with negative cost and each document position is reconstructed $K$ times. Moreover, it is possible to restrict representations to be valid CFL solutions by disallowing all non-unigram pointers for dictionary reconstruction, i.e. by setting $\hat{c}_p = \infty$ if $p$ is not a single character string.
\subsection{Dictionary Diffusion}\label{sec:diff}
We now discuss how to incorporate dictionary information from a compression $\mathfrak{D} = (S,P,\hat{P})$ into the BoN features for each corpus document. It will be convenient to store the BoN feature vectors $x^{D_k}$ for each document as rows in a feature matrix $X \in \mathbb{Z}^{|\mathcal{C}| \times (|S| + |\Sigma|)}$ and the BoN feature vectors $x^{s}$ for each dictionary string as rows in a feature matrix $G \in \mathbb{Z}^{(|S| + |\Sigma|) \times (|S| + |\Sigma|)}$. We also include rows of all $0$'s for every character in $\Sigma$ to make $G$ a square matrix for mathematical convenience. Graphically, this procedure transforms $\mathfrak{D}$ into a simpler DAG, $\mathfrak{D}_{\mathfrak{R}}$, by collapsing all multi-edges into single edges and labeling the resulting edges with an appropriate $x^s_z$. For any two features $s,z$, we say that $s$ is higher (lower) order than $z$ if it is a successor (predecessor) of $z$ in $\mathfrak{D}$.

Once our feature extraction process throws away positional information in the pointers higher order features capture more information than their lower order constituents since the presence of an $s\in S$ formed by concatenating features $z_1\dots z_m$ indicates the order in which the $z_i$ appear and not just that they appear. Conversely, since each $z_i$ appears in the same locations as $s$ (and typically many others), we can obtain better estimates for coefficients associated with $z_i$ than for the coefficient of $s$. If the learning problem does not require the information specified by $s$ we pay an unnecessary cost in variance by using this feature over the more frequent $z_i$. 

In view of this reasoning, feature matrix $X$ captures the highest order information about the documents but overlooks the features' lower order $n$-grams (that are indirectly used to reconstruct documents). This latter information is provided by the dictionary's structure in $G$ and can be incorporated by a graph diffusion process that propagates the counts of $s$ in each document to its constituent $z_i$, which propagate these counts to the lower order features used to construct them, and so on. This process stops once we reach the characters comprising $s$ since they are atomic. We can express this information flow in terms of $G$ by noting that the product $G^Tx^{D_k} = \sum_{s \in S\cup\Sigma}x^{D_k}_s x^s$ spreads $x^{D_k}_s$ to each of the $z_i$ used to reconstruct $s$ by multiplying $x^{D_k}_s$ with $x^s_{z_i}$, the number of times each $z_i$ is directly used in $s$. Graphically, node $s$ in $\mathfrak{D}_{\mathfrak{R}}$ sends $x^{D_k}_s$ units of flow to each parent $z_i$, and this flow is modulated in proportion to $x^s_{z_i}$, the strength of the edge connecting $z_i$ to $s$. Performing this procedure a second time, i.e. multiplying $G^T(G^Tx^{D_k})$, further spreads $x^{D_k}_sx^s_{z_i}$ to the features used to reconstruct $z_i$, modulated in proportion to their usage. Iterating this procedure defines a new feature matrix $\hat{X}=XH$ where $H=I + \sum_{n=1}^\infty G^n$ spreads the top level $x^{D_k}$ to the entire graph\footnote{This sum converges because $G$ corresponds to a finite DAG so it can be permuted to a strictly lower triangular matrix so that $\underset{n \rightarrow \infty}{\text{lim}} G^n=\mathbf{0}$. See Appendix section \ref{app:diff} for weighted variations.}.

We can interpret the effect of the dictionary diffusion process in view of two equivalent regularized learning problems that learn coefficients $\beta,\eta\in\mathbb{R}^{|S \cup \Sigma|}$ for every feature in $S \cup \Sigma$ by solving
\begin{equation}\label{eq:learn}
\begin{aligned}
& \minimize_{\beta\in\mathcal{R}^{|S \cup \Sigma|}} &&  L(\hat{X}\beta) + \lambda R(\beta) \\
\equiv & \minimize_{\eta\in\mathcal{R}^{|S \cup \Sigma|}} && L(X\eta) + \lambda R\left((I-G)\eta\right).
\end{aligned}
\end{equation}
We assume that $L$ is a convex loss (that may implicitly encode any labels), $R$ is a convex regularization penalty that attains its minimum at $\beta=\zeros$, and that a minimizer $\beta^*$ exists. Note that adding an unpenalized offset does not affect our analysis. The two problems are equivalent because $H$ is defined in terms of a convergent Neumann series and, in particular, $H=(I-G)^{-1}$ is invertible. We may switch from one problem to the other by setting $\beta=H^{-1}\eta$ or $\eta=H\beta$. 

When $\lambda=0$ the two problems reduce to estimating $\beta/\eta$ for unregularized models that only differ in the features they use, $\hat{X}$ or $X$ respectively. The equivalence of the problems shows, however, that using $\hat{X}$ in place of $X$ has \emph{no effect} on the models as their predictions are always the same. Indeed, if $\beta^*$ is optimal for the first problem then $\eta^*=H\beta^*$ is optimal for the second and for any $z\in\mathbb{R}^{|S \cup \Sigma|}$, the predictions $z^T\eta^* = (z^TH)\beta^*$ are the same. Unregularized linear models -- including generalized linear models -- are therefore \emph{invariant} to the dictionary reconstruction scheme and only depend on the document feature counts $x^{D_k}$, i.e. how documents are reconstructed.

When $\lambda > 0$, using $\hat{X}$ in place of $X$ results in a kind of graph Laplacian regularizer that encourages $\eta_s$ to be close to $\eta^Tx^s$. One interpretation of this is effect is that $\eta_s$ acts a ``label" for $s$: we use its feature representation to make a prediction for what $\eta_s$ should be and penalize the model for any deviations. A complementary line of reasoning uses the collapsed DAG $\mathfrak{D}_{\mathfrak{R}}$ to show that (\ref{eq:learn}) favors lower order features. Associated with every node $s\in S \cup \Sigma$ is a flow $\eta_s$ and node $z$ sends $\eta_z$ units of flow to each of its children $s$. This flow is attenuated (or amplified) by $x^s_z$, the strength of the edge connecting $z$ to $s$. In turn, $s$ adds its incoming flows and sends out $\eta_s$ units of flow to its children; each document's prediction is given by the sum of its incoming flows. Here $R$ acts a kind of ``flow conservation" penalty that penalizes nodes for sending out a different amount of flow than they receive and the lowest order nodes (characters) are penalized for any flow. From this viewpoint it follows that the model prefers to disrupt the flow conservation of lower order nodes whenever they sufficiently decrease the loss since they influence the largest number documents. Higher order nodes influence fewer documents than their lower order constituents and act as high frequency components.

\section{Experiments}
\label{sec:exp}

This section presents experiments comparing traditional BoN features with features derived from Dracula and CFL. Our primary goal is investigate whether deep compression can provide better features for learning than shallow compression or the traditional ``fully redundant'' BoN representation (using all $n$-grams up to a maximum length). Since any of these representations can be obtained from Dracula using an appropriate cost scheme, positive evidence for the deep compression implies Dracula is uncovering hierarchical structure which is simultaneously useful for compression and learning.  We also provide a measure of compressed size that counts the number of pointers used by each representation, i.e. the result of evaluating each compression with a ``common sense'' space objective where all costs are $1$. We use \textbf{Top} to indicate BoN features counting only document pointers ($X$ in previous section), \textbf{Flat} for dictionary diffusion features (i.e. $\hat{X}$), \textbf{CFL} for BoN features from CFL, and \textbf{All} for traditional BoN features using all $n$-grams considered by Dracula.

We used Gurobi (\cite{gurobi}) to solve the refined LP relaxation of Dracula for all of our experiments. While Gurobi can solve impressively large LP's, encoding Dracula for a general-purpose solver is inefficient and limited the scale of our experiments. Dedicated algorithms that utilize problem structure, such as the network flow interpretation of the reconstruction modules, are the subject of a follow-up paper and will allow Dracula to scale to large-scale datasets. We limited our parameter tuning to the dictionary pointer cost $\lambda$ (discussed in the solution path section) as this had the largest effect on performance. Experiments were performed with $\tau=0$, $\alpha=1$, a maximum $n$-gram length, and only on $n$-grams that appear at least twice in each corpus.

\begin{figure*}
\centering\includegraphics[scale=.45]{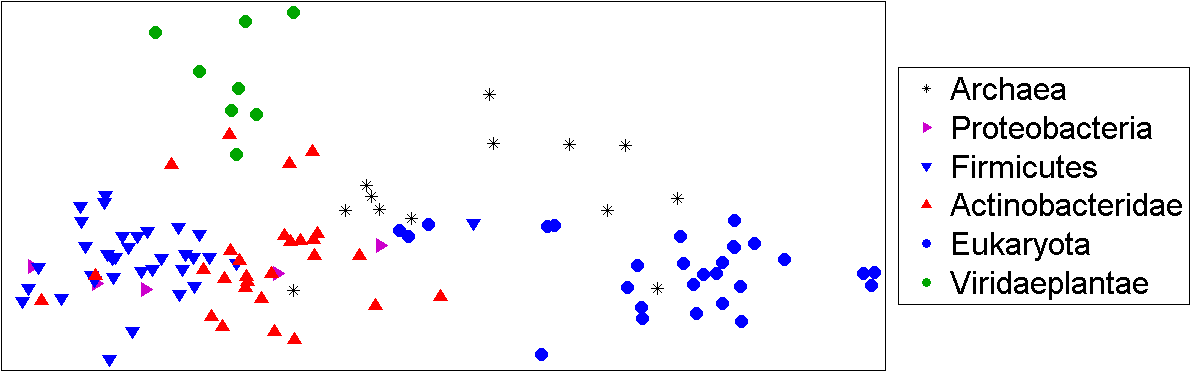}
\caption{Proteins represented using the $4^{\text{th}}$ and $5^{\text{th}}$ singular vectors of Top features from Dracula.} \label{fig:prot}
\end{figure*}

\paragraph{Protein Data} We ran Dracula using $7$-grams and $\lambda=1$ on $131$ protein sequences that are labeled with the kingdom and phylum of their organism of origin (\cite{protein}). Bacterial proteins ($73$) dominate this dataset, $68$ of which evenly come from Actinobacteria (A) and Fermicutes (F). The first $5$ singular values (SV's) of the Top features show a clear separation from the remaining SV's and Figure \ref{fig:prot} plots the proteins when represented by their $4^{\text{th}}$ and $5^{\text{th}}$ principle components. They are labeled by kingdom and, in more interesting cases, by phylum. Note the clear separation of the kingdoms, the two main bacterial phyla, and the cluster of plants separated from the other eukaryotes. Table \ref{bio-table} shows the average accuracy of two binary classification tasks in which bacteria are positive and we hold out either phylum A or F, along with other randomly sampled phyla for negative cases, as a testing set. We compare All features to Top features from Dracula and CFL using an $\ell_2$-regularized SVM with $C=1$. Since there are many more features than training examples we plot the effect of using the top $K$ principle components of each feature matrix. Flat features did not help and performance strictly decreased if we limited the $n$-gram length for All features, indicating that long $n$-grams contain essential information. Both compression criteria perform well, but using a deep dictionary seems to help as Dracula's profile is more stable than CFL's.
\begin{table}
\caption{Bacteria Identification Accuracy using Protein Data}
\begin{center}
\begin{tabular}{lllllll}\label{bio-table}
{\bf SVD Rank } & {\bf 5} & {\bf 10} & {\bf 15} & {\bf 20} & {\bf All} & {\bf \# Pointers}\\
\hline \\
\bf All & 59.5 & 77.7 & 83.3 & 77.6 & 81.1 & 4.54$\times 10^5$\\
\bf CFL & \bf 89.7 & 85.0 & 76.9 & 74.5 & 74.0 & 2.69$\times 10^4$\\
\bf Top & 87.5 & \bf 91.2 & \bf 89.0 & 83.3 & 84.3 & 1.76$\times 10^4$\\
\end{tabular}
\end{center}
\end{table}

\paragraph{Stylometry} We extracted $100$ sentences from each of the training and testing splits of the Reuters dataset (\cite{reuters}) for $10$ authors, i.e. $2,000$ total sentences, and replaced their words with part-of-speech tags. The goal of this task is to predict the author of a given set of writing samples (that all come from the same author). We make predictions by representing each author by the centroid of her $100$ training sentences, averaging together the unknown writing samples, and reporting the nearest author centroid to the sample centroid. We ran Dracula on this representation with $10$-grams and normalized centroids by their $\ell_1$ norm and features by their standard deviation. Table \ref{auth-table} compares the performance of All features to Top features derived from various $\lambda$'s for various testing sentence sample sizes. We report the average of $1,000$ trials, where each trial tested every author once and randomly selected a set of sample sentences from the testing split sentences. As in the protein data, neither Flat nor shorter $n$-gram features helped, indicating that higher order features contain vital information. CFL with $\lambda =20$ strictly dominated every other CFL representation and is the only one included for brevity. Dracula with $\lambda=10$ or $\lambda=20$ shows a clear separation from the other schemes, indicating that the deep compression finds useful structure.

\begin{table}
\caption{Author Identification Accuracy}
\begin{center}
\begin{tabular}{lllllll}\label{auth-table}
{\bf \# Samples } & {\bf 5} & {\bf 10} & {\bf 25} & {\bf 50} & {\bf 75} & {\bf \# Pointers}\\
\hline \\
\bf All & 36.0 & 47.9 & 67.9 & 80.6 & 86.4 & 5.01$\times 10^5$\\
\bf CFL $\lambda = 20$ & \bf 39.6 & 50.5 & 73.8 & 87.5 & 91.4 & 3.33$\times 10^4$\\
\bf Top $\lambda = 1$ & 35.1 & 46.2 & 68.6 & 85.3 & 93.7 & 2.39$\times 10^4$\\
\bf Top $\lambda = 10$ & \bf 39.6 & \bf 51.0 & \bf 75.0 & 88.9 & 93.7 & 3.00$\times 10^4$\\
\bf Top $\lambda = 20$ & 37.7 & 49.4 & 73.8 & \bf 91.5 & \bf 97.8 & 3.32$\times 10^4$\\
\end{tabular}
\end{center}
\end{table}

\paragraph{Sentiment Prediction} We use a dataset of $10,662$ movie review sentences (\cite{pang2005seeing}) labeled as having positive or negative sentiment. Bigrams achieve state-of-the-art accuracy on this dataset and unigrams perform nearly as well (\cite{wang2012baselines}), so enough information is stored in low order $n$-grams that the variance from longer $n$-grams hurts prediction. We ran Dracula using $5$-grams to highlight the utility of Flat features, which focus the classifier onto lower order features. Following (\cite{wang2012baselines}), Table \ref{nlp-table} compares the 10-fold CV accuracy of a multinomial na\"{\i}ve-Bayes (NB) classifier using Top or Flat features with one using all $n$-grams up to a maximum length. The dictionary diffusion process successfully highlights relevant low order features and allows the Flat representation to be competitive with bigrams (the expected best performer). The table also plots the mean $n$-gram length (\textbf{MNL}) used by document pointers as a function of $\lambda$. The MNL decreases as $\lambda$ increases and this eventually pushes the Top features to behave like a mix of bigrams and unigrams. Finally, we also show the performance of $\ell_2$ or $\ell_1$-regularized support vector machines for which we tuned the regularization parameter to minimize CV error (to avoid issues with parameter tuning). It is known that NB performs surprisingly well relative to SVMs on a variety of sentiment prediction tasks, so the dropoff in performance is expected. Both SVMs achieve their best accuracy with bigrams; the regularizers are unable to fully remove the spurious features introduced by using overly long $n$-grams. In contrast, Flat achieves its best performance with larger MNLs which suggests that Dracula performs a different kind of feature selection than is possible with direct $\ell_1/\ell_2$ regularization. Moreover, tuning $\lambda$ combines feature selection with NB or any kind of classifier, irrespective of whether it natively performs feature selection.

\begin{table}
\caption{Sentiment Classification Accuracy}
\begin{center}
\begin{tabular}{c c c c c || c c c c}\label{nlp-table}
{\bf $\lambda$: } & {\bf MNL} & {\bf \# Pointers} & {\bf Top} & {\bf Flat}  & {\bf $n$-gram Len.} & {\bf NB All} & {\bf SVM $\ell_1$ All} & {\bf SVM $\ell_2$ All}\\
\hline \\
0.25 & \bf 4.02 & 1.79$\times 10^5$ & 73.9 & 78.2 & \bf 5 & 77.9 & 76.6 & 76.9\\
0.5 & \bf 3.78 & 1.75$\times 10^5$ & 75.1 & \bf 78.8 & \bf 4 & 77.9 & 76.8 & 77.0\\
1 & \bf 3.19 & 1.71$\times 10^5$ & 76.6 & 78.2 & \bf 3 & 78.4 & 77.0 & 77.2\\
2 & \bf 2.51 & 1.71$\times 10^5$ & 78.0 & 78.1 & \bf 2 & \bf 78.8 & 77.2 & 77.5\\
5 & \bf 1.96 & 1.86$\times 10^5$ & 78.0 & 78.0 & \bf 1 & 78.0 & 76.3 & 76.5
\end{tabular}
\end{center}
\end{table}

\section{Conclusion}
\label{sec:conc}
We have introduced a novel dictionary-based compression framework for feature selection from sequential data such as text. Dracula extends CFL, which finds a shallow dictionary of $n$-grams with which to compress a document corpus, by applying the compression recursively to the dictionary. It thereby learns a deep representation of the dictionary $n$-grams and document corpus. Experiments with biological, stylometric, and natural language data confirm the usefulness of features derived from Dracula, suggesting that deep compression uncovers relevant structure in the data. 

A variety of extensions are possible, the most immediate of which is the design of an algorithm that takes advantage of the problem structure in Dracula. We have identified the basic subproblems comprising Dracula, as well as essential structure in these subproblems, that can be leveraged to scale the compression to large datasets. Ultimately, we hope to use Dracula to explore large and fundamental datasets, such as the human genome, and to investigate the kinds of structures it uncovers.

\subsubsection*{Acknowledgements}
Dedicated to {\cyr Ivan i Kalinka Handzhievi} (Ivan and Kalinka Handjievi). Funding provided by the Air Force Office of Scientific Research and the National Science Foundation.

\bibliography{bibliography}
\bibliographystyle{iclr2016_conference}

\appendix
\section{Appendix}

\subsection{Reconstruction Modules}\label{app:recon}
The reconstruction modules $R_{D_k}/\hat{R}_s$ are the basic building blocks of Dracula; when $t$ is fixed solving (\ref{eq:dracula}) is tantamount to solving the reconstruction modules \emph{separately}. These simple BLPs have a number of properties that result in computational savings because of the structure of the constraint matrix $X^{D_k}/X^s$. In order to simplify notation we define
\begin{equation}\label{eq:basic_recon_a}
\begin{aligned}
T_s(t,v;b,V) = & \underset{w \in \{0,1\}^{|\mathcal{P}(s)|}}{\text{minimize}}
& \sum_{p \in \mathcal{P}(s)} w_pb_p 
&& \text{subject to}
&& X^sw \geq v\mathbf{1}; \, w \leq V t.
\end{aligned}
\end{equation}
Using $T_{D_k}(t,1;c,V^{D_k}) = R_{D_k}(t;c)$ and $T_{s}(t,t_s;\hat{c},\hat{V}^{s}) = \hat{R}_{s}(t;\hat{c})$ results in the document or dictionary reconstruction modules. Now note that every column in $X^s$ is all zeros except for a contiguous sequence of ones so that $X^s$ is an \emph{interval matrix} and therefore totally unimodular (TUM). Define $T_s^c$ to be the LP relaxation of $T_s$ obtained by replacing the integrality constraints:
\begin{equation}\label{eq:basic_recon_b}
\begin{aligned}
T_s^c(t,v;b,V) = & \underset{w \in [0,1]^{|\mathcal{P}(s)|}}{\text{minimize}}
& \sum_{p \in \mathcal{P}(s)} w_pb_p 
&& \text{subject to}
&& X^sw \geq v\mathbf{1}; \, w \leq V t.
\end{aligned}
\end{equation}
Aside from $X$, the remaining constraints on $w$ are bound constraints. It follows from (\cite{books/daglib/0014647}) that $T_s^c$ is an LP over a integral polyhedron so we may conclude that
\begin{prop}\label{prop:1}
If the arguments $t,v$ are integral, then all basic solutions of $T_s^c(t,v;b,V)$ are binary.
\end{prop}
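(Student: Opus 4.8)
The plan is to exhibit the feasible region of $T_s^c(t,v;b,V)$ as an integral polyhedron contained in the unit cube, so that its vertices are automatically $0/1$ vectors. First I would rewrite the constraints in a single tidy form. The two families $w \leq Vt$ and $w \leq \mathbf{1}$ together say $w \leq u$, where $u := \min(Vt,\mathbf{1})$ is taken coordinatewise, and together with $w \geq \mathbf{0}$ and $X^s w \geq v\mathbf{1}$ this exhibits the feasible set as
\[
\mathcal{W} = \bigl\{\, w \in \mathbb{R}^{|\mathcal{P}(s)|} \;:\; X^s w \geq v\mathbf{1},\ \mathbf{0} \leq w \leq u \,\bigr\}.
\]
Since $\mathcal{W} \subseteq [0,1]^{|\mathcal{P}(s)|}$ it is bounded, hence pointed, so its basic feasible solutions are exactly its vertices; thus it suffices to show every vertex of $\mathcal{W}$ is integral, since integrality together with membership in $[0,1]^{|\mathcal{P}(s)|}$ forces each entry into $\{0,1\}$. (If $\mathcal{W}=\emptyset$ there are no basic solutions and the claim is vacuous, so no feasibility assumption on $t,v$ is needed.)

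Next I would record that $\mathcal{W}$ is an integral polyhedron. Writing every inequality in the form $Aw \leq c$ gives $A = \begin{pmatrix} -X^s \\ -I \\ I \end{pmatrix}$ and $c = \begin{pmatrix} -v\mathbf{1} \\ \mathbf{0} \\ u \end{pmatrix}$. The matrix $X^s$ is an interval matrix, hence totally unimodular as already noted; total unimodularity is preserved under negating a row and under appending rows of $\pm$ the identity (each such row has at most one nonzero entry, equal to $\pm 1$, so a Laplace expansion along those rows reduces any subdeterminant to one of $X^s$ up to sign), so $A$ is totally unimodular. This is where the hypothesis is used: because $v$ is integral, $v\mathbf{1}$ is integral, and because $t$ is integral and $V$ is a $0/1$ matrix, $Vt$ is integral, whence $u = \min(Vt,\mathbf{1})$ is integral; therefore $c$ is an integral vector. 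By the Hoffman--Kruskal integrality theorem (the result cited from \cite{books/daglib/0014647}), a polyhedron $\{w : Aw \leq c\}$ with $A$ totally unimodular and $c$ integral has only integral vertices. Hence every vertex of $\mathcal{W}$ is integral, and by the reduction above every basic solution of $T_s^c(t,v;b,V)$ is binary.

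I do not expect a genuine obstacle: the statement is essentially the specialization of the remark preceding it, made precise by observing that the right-hand side vector is integral exactly when $t$ and $v$ are. The two points that need a little care are (i) folding the constraints $w \leq Vt$ and $w \leq \mathbf{1}$ into the single integral bound $u$ and checking $u$ really is integral under the hypotheses, and (ii) being precise that ``basic solution'' here means a vertex of the feasible region, which is legitimate because $\mathcal{W}$ is bounded and therefore pointed.
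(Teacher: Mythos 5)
Your proposal is correct and follows essentially the same route as the paper, which simply observes that the non-$X^s$ constraints are bound constraints and cites the standard total-unimodularity/integral-polyhedron result; you have merely filled in the details (folding $w\leq Vt$ and $w\leq\mathbf{1}$ into an integral bound $u$, checking that appending $\pm I$ preserves total unimodularity, and invoking Hoffman--Kruskal). No gaps.
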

Indeed, a simple dynamic program discussed in (\cite{paskovCFL}) solves $T_s$ efficiently.

Our second property reformulates $T_s^c$ by transforming the constraint matrix $X^s$ into a simpler form. The resulting matrix has at most $2$ non-zero entries per column instead of up to $|s|$ non-zero entries per column in $X^s$. This form is more efficient to work with when solving the LP and it shows that $T_s^c$ is equivalent to a min-cost flow problem over an appropriately defined graph. Define $Q\in\{0,\pm1\}^{|s|\times |s|}$ be the full rank lower triangular matrix with entries $Q^s_{ii} = -Q^s_{(i+1)i}=1$ and $0$ elsewhere (and $Q^s_{|s||s|}=1$). The interval structure of $X^s$ implies that column $i$ of $Z^s=Q^sX^s$ is all zeros except for $Z^s_{ij}=-Z^s_{ik}=1$ where $j$ is the first row in which $X^s_{ij}=1$ and $k>j$ is the first row in which $X^s_{ik}=0$ after the sequences of ones (if such a $k$ exists). By introducing non-negative slack variables for the $X^sw\geq v \ones$ constraint, i.e. writing $X^sw = v \ones + \xi$, and noting that $Q^s\ones=\mathbf{e}_1$, where $\mathbf{e}_1$ is all zeros except for a $1$ as its first entry, we arrive at:

\begin{equation}\label{eq:basic_recon2_a}
\begin{aligned}
T_s^c(t,v;b,V) = \, & \underset{w, \xi}{\text{minimize}}
&& \sum_{p \in \mathcal{P}(s)} w_pb_p  \\
& \text{subject to}
&& Z^sw - Q^s\xi= v\mathbf{e}_1, \\
&&& \zeros \leq w \leq V t, \, \zeros \leq \xi.
\end{aligned}
\end{equation}
The matrix $\Psi=[Z^s \vert -Q^s]$ has special structure since every column has at most one $1$ and at most one $-1$. This allows us to interpret $\Psi$ as the incidence matrix of a directed graph if we add source and sink nodes with which to fill all columns out so that they have exactly one $1$ and one $-1$. $T_s^c$ may then be interpreted as a min-cost flow problem.


\subsubsection{Polyhedral Refinement} \label{app:refine}

We now show how to tighten Dracula's LP relaxation by adding additional constraints to $Q_C$ to shrink it closer to $Q$. If every time we see a string $s$ it is followed by the character $\alpha$ (in a given corpus), the strings $s$ and $s\alpha$ belong to the same \emph{equivalence class}; the presence of $s\alpha$ conveys the same information as the presence of $s$. Importantly, the theory of suffix trees shows that all substrings of a document corpus can be grouped into at most $2n-1$ equivalence classes (\cite{DBLP:books/cu/Gusfield1997}) where $n$ is the word count of the corpus. We always take equivalence classes to be inclusion-wise maximal sets and say that equivalence class $\varepsilon \subset \mathcal{S}$ appears at a location if any (i.e. all) of its members appear at that location. We prove the following theorem below. This theorem verifies common sense and implies that, when the pointer costs do not favor any particular string in $\varepsilon$, adding the constraint $\sum_{s\in\varepsilon}t_s \leq 1$ to the LP relaxation to tighten $\mathcal{Q}_C$ will not remove any binary solutions. 
\begin{thm}
Let $\Omega$ denote the set of all equivalence classes in corpus $\mathcal{C}$ and suppose that all costs are non-negative and $\forall \varepsilon \in \Omega, \forall z\in\mathcal{S}, \forall s,x\in\varepsilon$, the dictionary costs $d_s=d_x$ are equal, the pointer costs $c^z_p=c^z_q$ ($\hat{c}^z_p=\hat{c}^z_q$) are equal when $p=(l,s)$ and $q=(l,x)$, and $c^s_p=c^{x}_q$ ( $\hat{c}^s_p=\hat{c}^{x}_q$) whenever pointers $p=q=(l,h)$ refer to the same location and use the same string (or character) $h$. Then there is an optimal compression $\mathfrak{D} = (S,P\hat{P})$ in which $S$ contains \emph{at most} one member of $\varepsilon$.
\end{thm}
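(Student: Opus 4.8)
The plan is an exchange (normalization) argument. Among all optimal compressions pick one, $\mathfrak{D}=(S,P,\hat P)$, that minimizes $|S\cap\varepsilon|$; we may also assume $\mathfrak{D}$ has no unused dictionary string, since deleting such a string (with its reconstruction pointers) never raises the cost. Suppose toward a contradiction that $|S\cap\varepsilon|\ge 2$, and let $\sigma$ be the longest member of $\varepsilon$ lying in $S$. I will reroute every pointer that uses a member of $\varepsilon$ so that it uses $\sigma$ instead, delete the remaining members of $\varepsilon$ from $S$, and show the cost does not increase --- contradicting the choice of $\mathfrak{D}$.

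The geometric core is a \emph{fitting lemma}: for any string $y$ occurring somewhere in the corpus (in particular any document and any dictionary string) and any occurrence of a substring $h$ at position $l$ in $y$, the maximal (longest) representative $\bar h$ of $h$'s equivalence class occurs at position $l$ inside the maximal representative $\bar y$ of $y$'s class. Indeed $h$ --- hence all of $h$'s forced continuation up to $\bar h$ --- already sits at offset $l$ in every corpus occurrence of $\bar y$; were $\bar h$ not to fit, i.e.\ $l+|\bar h|-1>|\bar y|$, then every corpus occurrence of $\bar y$ would be followed by the same nonempty string, so $\bar y$ would not be maximal, a contradiction. Since $h$ is a prefix of $\bar h$, replacing ``use $h$ at position $l$ in $y$'' by ``use $\bar h$ at position $l$ in $\bar y$'' is always type-correct and covers a superset of the old positions.

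Using this, the rerouting goes as follows. A document pointer $(D_k,l,s_i)$ with $s_i\in\varepsilon$ becomes $(D_k,l,\sigma)$: $\sigma$ fits at position $l$ in $D_k$ (by the fitting lemma, since $D_k$ occurs in the corpus), the hypothesis $c^{D_k}_{(l,s_i)}=c^{D_k}_{(l,\sigma)}$ keeps the cost fixed, and coverage only grows. A dictionary pointer $(y,l,s_i)$ with $s_i\in\varepsilon$ reconstructing $y$ becomes $(y,l,\sigma)$ when $\sigma$ fits as a proper substring of $y$; when it does not fit, the fitting lemma forces $y[l..|y|]$ to itself be a member of $y$'s class that strictly truncates $\bar y$, and we first replace $y$ everywhere by the member $y^{+}$ of $y$'s own class into which $\sigma$ fits at position $l$. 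This is cost-neutral: $d_y=d_{y^{+}}$ and pointer costs are constant within $y^{+}$'s class, the inherited pointers of $\hat P(y)$ stay valid for the longer target $y^{+}$ since $y$ is a prefix of $y^{+}$, and the rerouted pointer $(y^{+},l,\sigma)$ covers the whole extension region $[|y|+1,|y^{+}|]$, so $y^{+}$ is still fully reconstructed. Cost-neutrality of each reroute is verified by composing the three symmetry hypotheses: $(y,l,s_i)\mapsto(y^{+},l,s_i)$ uses $\hat c^{y}_{(l,s_i)}=\hat c^{y^{+}}_{(l,s_i)}$ (same string, targets in one class), then $(y^{+},l,s_i)\mapsto(y^{+},l,\sigma)$ uses $\hat c^{y^{+}}_{(l,s_i)}=\hat c^{y^{+}}_{(l,\sigma)}$ ($s_i,\sigma\in\varepsilon$, fixed target and location). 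Finally, where $\sigma$'s own reconstruction uses a shorter $s_i\in\varepsilon$, the pointer $(\sigma,l,s_i)$ is replaced by a shifted copy of $\hat P(s_i)$ inside $\sigma$ (legal, as each string used there is shorter than $\sigma$ hence a proper substring of it), cost-neutral by $\hat c^{s_i}_{(p,g)}=\hat c^{\sigma}_{(p,g)}$, absorbing the degenerate ``target equals used string'' cases.

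After all replacements no pointer uses a member of $\varepsilon$ other than $\sigma$; since $\mathfrak{D}$ had no unused dictionary strings, every other member of $\varepsilon$ in $S$ is now unused and may be deleted with its reconstruction pointers, which only lowers the (nonnegative) objective. This yields an optimal compression with exactly one member of $\varepsilon$ in its dictionary, contradicting minimality of $|S\cap\varepsilon|$; hence for the chosen optimal compression $|S\cap\varepsilon|\le 1$. The step I expect to be the main obstacle is making the target-extension $y\mapsto y^{+}$ fully rigorous: extending a dictionary string can force further extensions of the dictionary strings that use it, so one must run this cascade and argue it terminates (each extension strictly increases $\sum_{s\in S}|s|$, which is bounded by the corpus length) --- or, equivalently, define the fully normalized compression in a single pass --- while carrying the cost-neutrality bookkeeping and the assorted ``proper substring'' corner cases through all the simultaneous substitutions.
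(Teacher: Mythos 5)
Your proposal is correct and follows essentially the same route as the paper's proof: an exchange argument that reroutes document pointers from a shorter class member to the longest one, replaces any dictionary string built on the shorter member by the corresponding (equivalent, extended) string in its own class, propagates this extension up the dictionary hierarchy using finiteness of the corpus for termination, and finally absorbs the shorter member's remaining use by constructing it ``in place.'' Your explicit fitting lemma and the $\sum_{s\in S}|s|$ termination measure make rigorous the steps the paper treats informally, but the underlying argument is the same.
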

\begin{proof}
Suppose that the conditions for theorem 1 hold, let $\varepsilon$ be an equivalence class, let $\mathfrak{D}=(S,P,\hat{P})$ be an optimal compression, and suppose for the sake of contradiction that $s_1,s_2\in \varepsilon$ are both included in the optimal dictionary. Without loss of generality we assume that $|s_1| < |s_2|$. Consider first document pointer $p$ which uses $s_1$ for document $D_k$. By assumption there is another pointer $q$ which uses $s_2$ in the same location and $c_p^{D_k}= c_q^{D_k}$ so we are indifferent in our choice. We thereby may replace all document pointers that use $s_1$ with equivalent ones that use $s_2$ without changing the objective value.
 
Consider next the usage of $s_1$ to construct higher order dictionary elements. We must be careful here since if some dictionary element $s_3$ is in the optimal dictionary $S$ and can be expressed as $s_3=zs_1$ for some string $z$ then we may not use $s_2$ in place of $s_1$ since it would lead to a different dictionary string. The key step here is to realize that $s_3$ must belong to the same equivalence class as string $zs_2$ and we can use $zs_2$ in place of $s_3$ in all documents. If $s_3$ is itself used to construct higher order dictionary elements, we can apply the same argument for $s_2$ to $zs_2$ in an inductive manner. Eventually, since our text is finite, we will reach the highest order strings in the dictionary, none of whose equivalence class peers construct any other dictionary n-grams. Our earlier argument shows that we can simply take the longest of the highest order n-grams that belong to the same equivalence class. Going back to $s_3$, we note that our assumptions imply that the cost of constructing $zs_2$ is identical to the cost of constructing $s_3$ so we may safely replace $s_3$ with $zs_2$.
The only remaining place where $s_1$ may be used now is to construct $s_2$. However, our assumptions imply that the cost of constructing $s_1$ ``in place'' when constructing $s_2$ is the same. By eliminating $s_1$ we therefore never can do worse, and we may strictly improve the objective if $t_{s_1}>0$ or $s_1$ is used to construct $s_2$ and its pointer cost is non-zero. QED.
\end{proof} 

\subsection{Weighted Diffusion}\label{app:diff}
When $G$ is generated from the relaxation of Dracula and $t\in(0,1]^{|S|}$ are the dictionary coefficients, any $s\in S$ with $t_s < 1$ will have $G_{sz} \leq t_s \forall z \in S$. In order to prevent overly attenuating the diffusion we may wish to normalize row $s$ in $G$ by $t_s^{-1}$ for consistency. We note that a variety of other weightings are also possible to different effects. For example, weighting $G$ by a scalar $\rho \geq 0$ attenuates or enhances the entire diffusion process and mitigates or enhances the effect of features the farther away they are from directly constructing any feature directly used in the documents.

\subsection{Proof of Path Theorem}\label{app:path}
The fundamental theorem of linear programming states that for any $c\in \mathbb{R}^d, S(c,Q) \equiv \text{arg min}_{x \in Q}\, x^Tc(\alpha)\in\mathscr{F}(Q)$ since $Q$ has non-empty interior and is therefore non-empty. We will use a construction known as the normal fan of $Q$, denoted by $\mathcal{N}(Q)$, that partitions $\mathbb{R}^d$ into a finite set of polyhedral cones pertaining to (linear) objectives for which each face in $\mathscr{F}(Q)$ is the solution set. We begin with some helpful definitions.

\begin{def}\label{def:part}
A \emph{partition} $P \subset 2^X$ of a set $X$ is any collection of sets satisfying $\bigcup_{p\in P}p = X$ and $\forall p,q\in P$ $p\neq q$ implies $p \cap q = \emptyset$.
\end{def}
\begin{def}\label{def:relint}
The \emph{relative interior} of a convex set $X \subset \mathbb{R}^d$, denoted by $\myrelint  X$, is the interior of $X$ with respect to its affine hull. Formally, $\myrelint X = \left\{x \in X \mid \exists \varepsilon > 0, B(x,\varepsilon) \cap \text{aff}X \subset X \right\}$.
\end{def}
The following definition is taken from (\cite{lu2008normal}):
\begin{def}\label{def:fan}
A \emph{fan} is a finite set of nonempty polyhedral convex cones in $\mathbb{R}^d$, $\mathcal{N}=\{N_1,N_2,\dots,N_m\}$, satisfying:
\begin{enumerate}
\item any nonempty face of any cone in $\mathcal{N}$ is also in $\mathcal{N}$,
\item any nonempty intersection of any two cones in $\mathcal{N}$ is a face of both cones.
\end{enumerate}
\end{def}
This definition leads to the following lemma, which is adapted from (\cite{lu2008normal}):
\begin{lem}\label{lem:fan}
Let $\mathcal{N}$ be a fan in $\mathbb{R}^d$ and $S = \bigcup_{N \in \mathcal{N}} N$ the union of its cones.
\begin{enumerate}
\item If two cones $N_1,N_2\in\mathcal{N}$ satisfy $(\textnormal{relint} N_1) \cap N_2 \neq \emptyset$ then $N_1 \subset N_2$,
\item The relative interiors of the cones in $\mathcal{N}$ partition $S$, i.e. $\bigcup_{N \in \mathcal{N}} \textnormal{relint}N = S$.
\end{enumerate}
\end{lem}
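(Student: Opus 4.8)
The plan is to reduce both claims to two standard facts about convex sets and then feed the two fan axioms into them. The first fact I will use is that if $F$ is a face of a convex set $C$ with $F \cap \myrelint C \neq \emptyset$, then $F = C$; the second is that every point of a convex set lies in the relative interior of exactly one of its faces, so that the relative interiors of the faces of $C$ partition $C$.

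For the first claim, I would pick $x \in (\myrelint N_1) \cap N_2$. Since $\myrelint N_1 \subset N_1$, the point $x$ lies in $N_1 \cap N_2$, so this intersection is nonempty and the second fan axiom makes $N_1 \cap N_2$ a face of $N_1$. As $x \in \myrelint N_1$ too, this face meets $\myrelint N_1$, so the first standard fact forces $N_1 \cap N_2 = N_1$, i.e.\ $N_1 \subset N_2$. The only genuine content is the proof of that standard fact, which I would include briefly: given $z \in F \cap \myrelint C$ and an arbitrary $y \in C$, relative-interiority lets me continue the segment from $y$ through $z$ to some $y' \in C$ with $z$ in the open segment $(y,y')$; the defining property of a face then pulls $y$ (and $y'$) into $F$, and since $y$ was arbitrary, $F = C$.

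For the second claim, disjointness of the relative interiors is immediate from the first claim: if some $x$ lay in $(\myrelint N_1) \cap (\myrelint N_2)$ with $N_1 \neq N_2$, applying the first claim in both directions would give $N_1 \subset N_2$ and $N_2 \subset N_1$, a contradiction. The covering $\bigcup_{N} \myrelint N = S$ has the trivial inclusion $\subseteq$; for $\supseteq$ I would take $x \in S$, so $x \in N$ for some $N \in \mathcal{N}$, invoke the second standard fact to obtain the unique face $F$ of $N$ with $x \in \myrelint F$, and then use the first fan axiom (closure under faces) to conclude $F \in \mathcal{N}$, hence $x \in \bigcup_N \myrelint N$.

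The main obstacle is almost entirely bookkeeping rather than depth: I must verify that $N_1 \cap N_2$ is nonempty before invoking the intersection axiom (which it is, since $\myrelint N_1 \subset N_1$), and I must ensure the faces produced by the partition fact genuinely belong to $\mathcal{N}$, which is exactly what the face-closure axiom guarantees. The two fan axioms are precisely the hypotheses needed to let the two standard convex-geometry facts interlock, so no machinery beyond those facts is required.
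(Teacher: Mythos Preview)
Your argument is correct. The paper does not actually prove this lemma; it simply states the result and attributes it to \cite{lu2008normal}. Your write-up is therefore strictly more complete than what the paper offers: you reduce both parts to the two standard convex-geometry facts (a face meeting the relative interior of a convex set must be the whole set, and the relative interiors of the faces of a polyhedral set partition it) and then show that the two fan axioms are exactly what is needed to make those facts apply. This is the canonical proof, and your bookkeeping is in order: you correctly check that $N_1\cap N_2$ is nonempty before invoking the intersection axiom, and you use the face-closure axiom to ensure the face $F$ you extract lies in $\mathcal{N}$. The only remark worth adding is that the second standard fact (faces' relative interiors partition the set) holds here because the cones are \emph{polyhedral} by the definition of a fan, so the face lattice is finite and well-behaved; you might make that dependence explicit.
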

Lemma \ref{lem:fan} is subtle but important as it contains a key geometric insight that allow us to prove our theorem. Next, let $Q\subset \mathbb{R}^d$ be a bounded polyhedron with vertex set $V$ and nonempty interior, i.e. whose affine hull is $d$-dimensional. For any face $F \in \mathscr{F}(Q)$ define $V(F) = F \cap V$ to be the vertices of $F$ and $N_F = \left\{ y \in \mathbb{R}^d \mid \forall x\in F, \forall z \in Q, y^Tx \leq y^Tz \right\}$ to be the normal cone to $F$. That $N_F$ is a (pointed) polyhedral cone follows from noting that it can be equivalently expressed as a finite collection of linear constraints involving the vertices of $F$ and $Q$: $N_F = \left\{ y \in \mathbb{R}^d \mid \forall x\in V(F), \forall z \in V, y^Tx \leq y^Tz \right\}$. The \emph{normal fan} for $Q$, $\mathcal{N}(Q) = \{N_F\}_{F \in \mathscr{F}(Q)}$, is defined to be the set of all normal cones for faces of $Q$. Noting that $Q$ is bounded and therefore has a recession cone of $\{0\}$, the following Lemma is implied by Proposition 1 and Corollary 1 of (\cite{lu2008normal}):
\begin{lem}\label{lem:normal_fan}
Let $\mathcal{N}(Q)$ be the normal fan of a bounded polyhedron $Q$ with non-empty interior in $\mathbb{R}$. Then
\begin{enumerate}
\item $\mathcal{N}(Q)$ is a fan,
\item for any nonempty faces $F_1,F_2\in\mathscr{F}(Q)$, $F_1 \subset F_2$ iff $N_{F_1} \supset N_{F_2}$,
\item $\bigcup_{F \in \mathscr{F}(Q)} \textnormal{relint}N_F = \mathbb{R}^d$,
\item every nonempty face $F\in\mathscr{F}(Q)$ satisfies $\textnormal{relint}N_F=\left\{y\in \mathbb{R}^d\mid F = S(y,Q)\right\}$.
\end{enumerate}
\end{lem}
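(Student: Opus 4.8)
The plan is to read off all four statements from a single \emph{master equivalence} together with one dimension fact that I isolate as the crux. Write $h_Q(y)=\min_{x\in Q}y^Tx$ and recall from the fundamental theorem of linear programming (applicable because $Q$ is bounded and nonempty) that $S(y,Q)$ is always a nonempty face of $Q$. Unwinding the definition of the normal cone, and using that $N_F$ may be described through the vertex sets $V(F)\subseteq V$, gives the equivalence $y\in N_F \iff F\subseteq S(y,Q)$; that is, $N_F=\{y:F\subseteq S(y,Q)\}$. The easy direction of part 2 is already immediate: if $F_1\subseteq F_2$ and $y\in N_{F_2}$ then $F_1\subseteq F_2\subseteq S(y,Q)$, so $y\in N_{F_1}$ and hence $N_{F_2}\subseteq N_{F_1}$.

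First I would establish the ``$\supseteq$'' half of part 4 — that $S(y,Q)=F$ forces $y\in\myrelint N_F$ — because it is self-contained and it already delivers part 3. Describe $N_F$ by its defining inequalities $\phi_{x,v}(z)=z^T(x-v)\le 0$ for $x\in V(F),\,v\in V$, and use the standard characterization of the relative interior of a polyhedron: a point lies in the relative interior iff every defining inequality that is not an implicit equality is strict there. If $y$ were on the relative boundary, some $\phi_{x^*,v^*}$ would be tight at $y$ yet strict somewhere on $N_F$; but $x^*\in F=S(y,Q)$ makes $y^Tx^*=h_Q(y)$, so tightness forces $y^Tv^*=h_Q(y)$, i.e.\ $v^*$ is optimal and hence $v^*\in S(y,Q)=F$. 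Then both $x^*,v^*\in V(F)$, so $\phi_{x^*,v^*}$ and $\phi_{v^*,x^*}$ are both defining inequalities, forcing $\phi_{x^*,v^*}\equiv 0$ on $N_F$ — an implicit equality, a contradiction. Part 3 then follows instantly: for any $y$, put $F=S(y,Q)$; trivially $S(y,Q)=F$, so $y\in\myrelint N_F$, whence $\bigcup_{F}\myrelint N_F=\mathbb R^d$.

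The reverse inclusion of part 4 — $y\in\myrelint N_F\Rightarrow S(y,Q)=F$ — is where the genuine difficulty sits. We already have $F\subseteq S(y,Q)$, so suppose some vertex $v\in V(S(y,Q))\setminus V(F)$ existed. For each $x\in V(F)$ the inequality $\phi_{x,v}$ is tight at $y$ (both $x,v$ are optimal for $y$), so by the relative-interior characterization it is an implicit equality: $z^T(v-x)=0$ for all $z\in N_F$. Thus $v-x\perp\mathrm{span}\,N_F$. Here I would invoke the one structural fact I expect to be the \emph{main obstacle}: writing $L_F$ for the linear subspace parallel to $\mathrm{aff}\,F$, one has $\mathrm{span}\,N_F=L_F^{\perp}$, equivalently $\dim N_F=d-\dim F$. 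Granting it, $v-x\in L_F$, so $v\in\mathrm{aff}\,F$; since $v$ is a vertex of $Q$ lying in the affine hull of the face $F$, every supporting hyperplane defining $F$ is tight at $v$, so $v\in F$ — contradicting $v\notin V(F)$. Hence $V(S(y,Q))=V(F)$ and $S(y,Q)=F$. With part 4 in hand the hard direction of part 2 is one line: for $y\in\myrelint N_{F_2}$ we get $S(y,Q)=F_2$, and $N_{F_2}\subseteq N_{F_1}$ gives $y\in N_{F_1}$, so $F_1\subseteq S(y,Q)=F_2$.

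Finally, part 1. From the master equivalence, $N_{F_1}\cap N_{F_2}=\{y:F_1\cup F_2\subseteq S(y,Q)\}=N_{F_1\vee F_2}$, where $F_1\vee F_2$ is the smallest face of $Q$ containing both; and for $F\subseteq G$ the cone $N_G$ equals the face $N_F\cap\bigcap_{u\in V(G)}\{z:z^T u=z^Tx_0\}$ of $N_F$ (for fixed $x_0\in V(F)$), which one checks by the same optimality bookkeeping. Together these give both fan axioms: the intersection of two cones is a common face, and every nonempty face of $N_F$ is itself some $N_G$ with $G\supseteq F$ (pick a relative-interior point $y$ of the face, identify $G=S(y,Q)$ via part 4, and use that distinct faces of the polyhedron $N_F$ have disjoint relative interiors). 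The decisive step throughout is the dimension identity $\dim N_F=d-\dim F$; I would prove it by exhibiting, for a proper face $F$ of codimension $m=d-\dim F$, the $m$ facets of $Q$ whose intersection cuts out $\mathrm{aff}\,F$ and whose inner normals are linearly independent — each such normal lies in $N_F$, so $\mathrm{span}\,N_F$ already has dimension $\ge m$, while $N_F\subseteq L_F^{\perp}$ forces $\le m$. This is exactly the face--cone anti-isomorphism imported from \cite{lu2008normal} (their Proposition 1 and Corollary 1), which is why the statement is attributed to them.
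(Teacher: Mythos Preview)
Your argument is correct. The paper itself does not prove this lemma: it simply records that the four claims are ``implied by Proposition~1 and Corollary~1 of \cite{lu2008normal}'' once one notes that a bounded $Q$ has trivial recession cone, and it separately imports the dimension formula $\dim F+\dim N_F=d$ as its Lemma~\ref{lem:dim} from \cite{ziegler1995lectures}. Your route is genuinely different in that it is self-contained: you organize everything around the single equivalence $N_F=\{y:F\subseteq S(y,Q)\}$ and the implicit-equality description of $\myrelint N_F$, then read off part~3 from the easy half of part~4, part~2 from part~4, and part~1 from $N_{F_1}\cap N_{F_2}=N_{F_1\vee F_2}$ together with the explicit face description of $N_G$ inside $N_F$ for $F\subseteq G$. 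The only nontrivial input you need is the dimension identity $\mathrm{span}\,N_F=L_F^{\perp}$, which you correctly flag as the crux and sketch via facet normals; the paper, by contrast, outsources both this identity and the fan axioms to the cited references. One small tightening: for ``$v\in\mathrm{aff}\,F$ implies $v\in F$'' a single supporting hyperplane $H$ with $F=Q\cap H$ already suffices, since $\mathrm{aff}\,F\subseteq H$ and $v\in Q$ give $v\in Q\cap H=F$.
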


We will also makes use of the following two results. The first is implied by Theorem 2.7, Corollary 2.14,  and Problem 7.1 in \cite{ziegler1995lectures}:
\begin{lem}\label{lem:dim}
Let $Q\subset \mathbb{R}^d$ be a bounded polyhedron with nonempty interior, $F \in \mathscr{F}(Q)$, and $N_F$ the normal cone to $F$. Then $\textnormal{dim}\, F + \textnormal{dim}\,  N_F = d$.
\end{lem}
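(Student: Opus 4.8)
The plan is to split the claim into the two inequalities $\textnormal{dim}\,N_F \le d - \textnormal{dim}\,F$ and $\textnormal{dim}\,N_F \ge d - \textnormal{dim}\,F$, working throughout with the direction space $L = \textnormal{lin}(\textnormal{aff}\,F)$, which has dimension $\textnormal{dim}\,F$. The first inequality is immediate: any $y \in N_F$ attains $\min_{z \in Q} y^{T}z$ at every point of $F$, so $y^{T}(v-w) = 0$ for all $v,w \in F$, which means $N_F \subseteq L^{\perp}$ and hence $\textnormal{dim}\,N_F \le d - \textnormal{dim}\,F$. The real content is the reverse inequality, which I would recast as the statement that $N_F$ spans all of $L^{\perp}$.

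To get that, I would pass to a point $x_0 \in \textnormal{relint}\,F$ and express $N_F$ as a dual cone. First I would verify that $N_F$ coincides with the ordinary normal cone $N_Q(x_0) = \{y : y^{T}x_0 \le y^{T}z\ \forall z \in Q\}$: the inclusion $N_F \subseteq N_Q(x_0)$ is trivial since $x_0 \in F$, and conversely any $y \in N_Q(x_0)$ has $x_0$ in its optimal face $S(y,Q)$, which, because $F$ is the minimal face containing $x_0$, must contain all of $F$, so $y \in N_F$. This identifies $N_F = T_Q(x_0)^{\ast}$, the dual of the tangent cone of the polytope $Q$ at $x_0$. Next I would show the lineality space of $T_Q(x_0)$ is exactly $L$: the inclusion $L \subseteq T_Q(x_0)$ holds because $x_0 \in \textnormal{relint}\,F$ lets us move a short distance in either direction inside $\textnormal{aff}\,F$ while staying in $Q$; conversely, if $d$ and $-d$ both lie in $T_Q(x_0)$ then $x_0 \pm \varepsilon d \in Q$ for some $\varepsilon > 0$, so $x_0$ is the midpoint of two points of $Q$, forcing the minimal face $F$ to contain both and hence $d \in L$.

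I would then finish with the cone-duality identities $\textnormal{lin}(C^{\ast}) = (\textnormal{span}\,C)^{\perp}$ and, applying this to $C^{\ast}$ together with $C^{\ast\ast} = C$, $\textnormal{span}(C^{\ast}) = (\textnormal{lin}\,C)^{\perp}$, valid for any closed convex cone $C$. Taking $C = T_Q(x_0)$ yields $\textnormal{span}\,N_F = L^{\perp}$, and since $N_F$ is a cone its dimension equals that of its span, so $\textnormal{dim}\,N_F = \textnormal{dim}\,L^{\perp} = d - \textnormal{dim}\,F$. I expect the middle step to be the main obstacle: pinning down that the lineality space of the tangent cone is precisely the direction space of $\textnormal{aff}\,F$, together with the bookkeeping needed to reconcile the paper's definition of $N_F$ with the convex-analytic normal cone at a relative interior point. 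A purely polyhedral alternative would instead fix an irredundant description $Q = \{x : Ax \le b\}$, take $I$ to be the inequalities active on $F$ (so that $\textnormal{dim}\,F = d - \textnormal{rank}\,A_I$ and, by LP duality, $N_F = \{-A_I^{T}\mu : \mu \ge 0\}$), and invoke strict complementarity to obtain a multiplier $\mu > 0$ witnessing that this cone is full-dimensional inside the $\textnormal{rank}\,A_I$-dimensional row space of $A_I$; there the delicate point is precisely the existence of that strictly positive $\mu$.
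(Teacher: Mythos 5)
Your argument is correct, and it is genuinely different from the paper's treatment: the paper does not prove Lemma~\ref{lem:dim} at all, but simply asserts that it follows from Theorem~2.7, Corollary~2.14, and Problem~7.1 of \cite{ziegler1995lectures}, where the dimension formula is obtained from the general combinatorial machinery of the normal fan and the inclusion-reversing correspondence $F \mapsto N_F$. You instead give a self-contained convex-duality proof, and every step checks out: $N_F \subseteq L^{\perp}$ is immediate; the identification $N_F = N_Q(x_0) = T_Q(x_0)^{\ast}$ for $x_0 \in \textnormal{relint}\,F$ is right (the key fact that any face containing a relative-interior point of $F$ must contain $F$ is exactly the extreme-set property you invoke); the computation $\textnormal{lin}\,T_Q(x_0) = L$ is correct because for a polyhedron the tangent cone coincides with the cone of feasible directions, so $\pm d \in T_Q(x_0)$ really does yield $x_0 \pm \varepsilon d \in Q$ and hence, by extremality of $F$, membership in $F$; and the duality identities $\textnormal{lin}(C^{\ast}) = (\textnormal{span}\,C)^{\perp}$ and $\textnormal{span}(C^{\ast}) = (\textnormal{lin}\,C)^{\perp}$ for closed convex cones finish it. What your route buys is a proof readable without the face-lattice/fan formalism, at the cost of some convex-analytic bookkeeping; what the citation buys the authors is brevity and consistency with the normal-fan language already used in Lemmas~\ref{lem:fan} and~\ref{lem:normal_fan}. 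One small simplification to your fallback polyhedral argument: strict complementarity is not actually needed there, since the dimension of the finitely generated cone $\left\{-A_I^{T}\mu : \mu \geq 0\right\}$ is just $\textnormal{rank}\,A_I$ (its affine hull contains $0$ and every generator, hence equals the row space of $A_I$).
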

The second Lemma states a kind of neighborliness for the cones in $\mathcal{N}(Q)$:
\begin{lem}\label{lem:nbr}
Let $Q\subset \mathbb{R}^d$ be a bounded polyhedron with nonempty interior. For any $N\in\mathcal{N}(Q)$ and $x \in \textnormal{relint}\,  N$ there exists a $\delta > 0$ such that for any $y \in B(x,\delta)$ there is a $N' \in \mathcal{N}(Q)$ with $y \in \textnormal{relint}\, N'$ and $N \subset N'$. 
\end{lem}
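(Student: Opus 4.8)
The plan is to reduce the statement to a single ``finitely many closed sets avoiding a point'' argument, using only the fan structure of $\mathcal{N}(Q)$ and the partition property of its cones' relative interiors. First I would record the basic finiteness: $\mathcal{N}(Q)=\{N_F\}_{F\in\mathscr{F}(Q)}$ is indexed by the faces of the bounded polyhedron $Q$, of which there are finitely many, and each $N_F$ is a closed polyhedral cone. By Lemma~\ref{lem:normal_fan} the collection $\mathcal{N}(Q)$ is a fan, and by item~3 of that lemma $\bigcup_{N\in\mathcal{N}(Q)}\textnormal{relint}\,N=\mathbb{R}^d$; since $\textnormal{relint}\,N\subseteq N$ this also gives $\bigcup_{N\in\mathcal{N}(Q)}N=\mathbb{R}^d$, so Lemma~\ref{lem:fan} is available with $S=\mathbb{R}^d$.

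Next I would isolate the ``bad'' cones. Set $\mathcal{A}=\{M\in\mathcal{N}(Q)\mid x\notin M\}$. This is a finite family of closed sets, none of which contains $x$, so $x\notin\bigcup_{M\in\mathcal{A}}M$ and the latter set is closed. Hence there is a $\delta>0$ with $B(x,\delta)\cap\bigcup_{M\in\mathcal{A}}M=\emptyset$; equivalently, $B(x,\delta)$ is contained in the union of those cones of $\mathcal{N}(Q)$ that do contain $x$.

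Then I would verify the conclusion for an arbitrary $y\in B(x,\delta)$. By item~2 of Lemma~\ref{lem:fan}, applied to the fan $\mathcal{N}(Q)$ with $S=\mathbb{R}^d$, the relative interiors of the cones partition $\mathbb{R}^d$, so there is a unique $N'\in\mathcal{N}(Q)$ with $y\in\textnormal{relint}\,N'$; in particular $y\in N'$. Since $y\in B(x,\delta)$ lies in no member of $\mathcal{A}$, we must have $N'\notin\mathcal{A}$, i.e. $x\in N'$. Now $x\in(\textnormal{relint}\,N)\cap N'$, so this intersection is nonempty, and item~1 of Lemma~\ref{lem:fan} (again for the fan $\mathcal{N}(Q)$, with $N_1=N$ and $N_2=N'$) yields $N\subseteq N'$. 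Thus $N'$ works: $y\in\textnormal{relint}\,N'$ and $N\subseteq N'$, with the same $\delta$ serving every $y\in B(x,\delta)$.

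The argument is essentially bookkeeping on top of the supplied fan lemmas, so I do not expect a genuine obstacle; the one place that requires care — and the conceptual crux — is the decision to localize around $x$ by \emph{excluding} the cones that miss $x$, rather than working directly with the cones containing $x$ or with a minimal such cone. It is precisely this move that lets a single radius $\delta$ force $x\in N'$ for whichever cone's relative interior captures $y$, and hence, via the ``relint meets cone'' criterion of Lemma~\ref{lem:fan}(1), the containment $N\subseteq N'$.
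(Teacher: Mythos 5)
Your proof is correct and follows essentially the same route as the paper's: both isolate the finitely many ``bad'' cones that must be avoided near $x$, use their closedness and finiteness to obtain a uniform $\delta$, place $y$ in the relative interior of a unique cone $N'$ via the partition property, and invoke Lemma~\ref{lem:fan}(1) at the point $x\in(\textnormal{relint}\,N)\cap N'$ to conclude $N\subseteq N'$. The only difference is organizational: the paper runs a contradiction argument over the cones not containing $N$ (which, by Lemma~\ref{lem:fan}(1), is the same collection as your cones not containing $x$), whereas you argue directly from the fact that a finite union of closed sets missing $x$ stays at positive distance from it -- a slightly cleaner packaging of the identical idea.
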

\begin{proof}
Let $N\in\mathcal{N}(Q)$ and $x \in N$ be given. We say that $N'\in\mathcal{N}(Q)$ \emph{occurs within} $\delta$ (for $\delta > 0)$ if there is some $y \in B(x, \delta)$ with $y \in \myrelint  N'$. Now suppose that there is an $N'\in\mathcal{N}(Q)$ that occurs within $\delta$ for all $\delta >0$. Since $N'$ is a closed convex cone it must be that $x \in N'$ so we may conclude from Lemma \ref{lem:fan} that $N\subset N'$. Next, let $\mathcal{M}$ be the set of cones in $\mathcal{N}(Q)$ which do not contain $N$ and suppose that for all $\delta > 0$ there is some $N' \in \mathcal{M}$ that occurs within $\delta$. Since $|\mathcal{M}|$ is finite, this is only possible if there is a cone $N'\in\mathcal{M}$ that occurs within $\delta$ for all $\delta > 0$. However, this leads to a contradiction since $N'$ must contain $N$ so the Lemma follows.
\end{proof}

We are now ready to prove our main theorem which is restated below with $S(c,Q) = \text{arg min}_{x \in Q}\, x^Tc(\alpha)$ for simplicity.
\begin{thm}
Let $Q\subset \mathbb{R}^d$ be a bounded polyhedron with nonempty interior and $c:[0,1]\rightarrow\mathbb{R}^d$ a continuous function. Then for some $N\in\mathbb{Z}_+\cup \{\infty\}$ there exists a countable partition $\Gamma = \left\{\gamma_i\right\}_{i = 0}^{N}$ of $[0,1]$ with corresponding faces $F_i\in\mathscr{F}(Q)$ satisfying $F_i \neq F_{i+1}$ and  $F_{i}\cap F_{i+1} \neq \emptyset$ and $F_i = S(c(\alpha), Q)\, \forall \alpha \in \gamma_i$. Moreover, $F_i$ never has the same dimension as $F_{i+1}$ and the boundary between $\gamma_i,\gamma_{i+1}$ is $)[$ iff $\textnormal{dim}\,F_i < \textnormal{dim}\,F_{i+1}$ and $]($ otherwise.
\end{thm}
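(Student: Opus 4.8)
The plan is to show that the map $\alpha\mapsto S(c(\alpha),Q)$ is piecewise constant into $\mathscr{F}(Q)$ and that its regions of constancy, together with their adjacencies, are dictated by the normal fan $\mathcal{N}(Q)$. First I would fix the bookkeeping. By the fundamental theorem of linear programming $S(c(\alpha),Q)\in\mathscr{F}(Q)$ for every $\alpha\in[0,1]$, and by Lemma~\ref{lem:normal_fan}(3) the relative interiors $\{\myrelint N_F\}_{F\in\mathscr{F}(Q)}$ partition $\mathbb{R}^d$; hence there is a well-defined $\phi:[0,1]\to\mathscr{F}(Q)$ with $c(\alpha)\in\myrelint N_{\phi(\alpha)}$, and by Lemma~\ref{lem:normal_fan}(4) this $\phi(\alpha)$ equals $S(c(\alpha),Q)$. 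Since $\mathscr{F}(Q)$ is finite, $\phi$ has finite range. I would then take $\Gamma$ to be the family of connected components of the level sets $\phi^{-1}(F)$, $F\in\mathscr{F}(Q)$: these partition $[0,1]$, each is an interval (a connected subset of $\mathbb{R}$), and $\phi$ is constant on each $\gamma\in\Gamma$ with value $F_\gamma=S(c(\alpha),Q)$.

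The analytic core is a local semicontinuity lemma: for every $\alpha_0\in[0,1]$ there is $\delta>0$ such that $\phi(\alpha)\subseteq\phi(\alpha_0)$ as faces of $Q$ for all $\alpha\in B(\alpha_0,\delta)\cap[0,1]$, and in particular $\dim\phi$ is upper semicontinuous. This follows by combining continuity of $c$ with Lemma~\ref{lem:nbr} applied to $N=N_{\phi(\alpha_0)}$ and $x=c(\alpha_0)$ (so that $c(\alpha)$ near $c(\alpha_0)$ lies in $\myrelint N'$ for some cone $N'\supseteq N_{\phi(\alpha_0)}$) and then invoking Lemma~\ref{lem:normal_fan}(2), which converts $N_{\phi(\alpha)}\supseteq N_{\phi(\alpha_0)}$ into $\phi(\alpha)\subseteq\phi(\alpha_0)$.

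Given this, the adjacency, dimension, and boundary-type statements fall out. For two consecutive pieces $\gamma_i,\gamma_{i+1}$ with common endpoint $\beta$, exactly one of them contains $\beta$ (otherwise the piece of $\Gamma$ containing $\beta$ would lie strictly between them). If $\beta\in\gamma_i$ then $\phi(\beta)=F_i$, and letting $\alpha\to\beta^+$ along $\gamma_{i+1}$ the local lemma forces $F_{i+1}=\phi(\alpha)\subseteq F_i$; since the pieces are distinct components $F_{i+1}\neq F_i$, so $F_{i+1}\subsetneq F_i$ and $\dim F_{i+1}<\dim F_i$, the shared endpoint sits in the higher-dimensional piece, and the boundary reads $]($. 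The case $\beta\in\gamma_{i+1}$ is symmetric and yields $F_i\subsetneq F_{i+1}$, $\dim F_i<\dim F_{i+1}$, and boundary $)[$. In both cases $F_i\cap F_{i+1}$ is the smaller of the two faces, hence nonempty, and $\dim F_i\neq\dim F_{i+1}$; Lemma~\ref{lem:dim} can be used if one prefers to phrase the dimension bookkeeping in terms of $N_{F_i},N_{F_{i+1}}$.

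The step I expect to be the main obstacle is turning $\Gamma$ into the claimed linearly indexed family $\{\gamma_i\}_{i=0}^{N}$ with genuinely consecutive members: the non-degenerate intervals of $\Gamma$ form a countable family, but the degenerate (single-point) pieces, where $\phi$ momentarily jumps to a higher-dimensional face, can a priori accumulate, so some care — or a mild regularity hypothesis on $c$ guaranteeing each $\phi^{-1}(F)$ has finitely many components, which holds for piecewise-monotone or piecewise-linear cost paths — is needed to rule out pathological oscillation and to make ``the piece after $\gamma_i$'' meaningful. Once local finiteness of the pieces is available, a short induction starting from the lowest locally attained dimension (on which, by upper semicontinuity, $\phi$ is locally constant, so those pieces are genuinely open intervals) produces the enumeration and the alternating open/closed boundary pattern, completing the argument.
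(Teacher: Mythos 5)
Your structural argument is essentially the paper's: identifying $S(c(\alpha),Q)$ with the unique face whose normal cone's relative interior contains $c(\alpha)$, the local upper-semicontinuity step obtained from Lemma \ref{lem:nbr} together with continuity of $c$ and Lemma \ref{lem:normal_fan}, and the endpoint analysis at a shared boundary point giving $F_i\subsetneq F_{i+1}$ or $F_{i+1}\subsetneq F_i$ (hence nonempty intersection, distinct dimensions via Lemma \ref{lem:dim}, and the $)[$ versus $]($ dichotomy) all match the paper's proof. The genuine gap is the one you name yourself and then decline to close: you do not prove that the partition is countable, and you propose to recover countability by assuming $c$ is piecewise monotone or piecewise linear. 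That weakens the theorem; the statement is for an arbitrary continuous $c$ and the paper needs no such hypothesis.

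The paper closes this gap with a downward induction on the dimension of the normal cone rather than on the dimension of the face. Set $f(x)=S(c(x),Q)$ and $\omega_k=\left\{x\in[0,1]\mid \mydim N_{f(x)}\geq k\right\}$, and prove by induction on $k=d,d-1,\dots,0$ that the pieces contained in $\omega_k$ form a countable family. Non-degenerate pieces are countable outright because each contains a rational. For a degenerate piece $\{x\}$ with $\mydim N_{f(x)}=k$, your own local lemma supplies a punctured neighborhood of $x$ on which the normal cone contains $N_{f(x)}$; a cone of a fan that strictly contains another has it as a proper face and hence strictly larger dimension, so the paper concludes that this punctured neighborhood lies in $\omega_{k+1}$. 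This makes the level-$k$ singleton pieces isolated from one another, hence countable, while $\omega_{k+1}$ is handled by the inductive hypothesis; taking $k=0$ gives countability of $\Gamma$. Note that this argument delivers countability only, not local finiteness: singleton pieces can indeed accumulate, as you suspected, and the paper does not attempt the globally consecutive enumeration your closing induction aims for --- its final paragraph argues only about a pair of pieces sharing a boundary point (the supremum of one equal to the infimum of the other), which is all that the adjacency, dimension, and boundary-type claims require. If you replace your extra regularity hypothesis with the $\omega_k$ stratification, your proof becomes the paper's.
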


\begin{proof}
For ease of notation let $f(x) = S(c(x),Q)$ and for $k = 0,\dots,d$ define $\omega_k = \left\{x \in [0,1]\mid \mydim N_{f(x)} \geq k\right\}$ to be the set of all arguments to $c$ whose normal cone is at least $k$-dimensional. Moreover, for any $x \in [0,1]$ define $\sigma(x) = \left\{y\in [0,x]\mid \forall z\in [y,x], f(x) = f(z) \right\} \cup \left\{y\in [x,1]\mid \forall z\in [x,y], f(x) = f(z) \right\}$ to be the largest contiguous set containing $x$ over which $f$ remains constant and let $m(x) = \myinf \sigma(x)$ and $M(x) = \mysup \sigma(x)$ be its infinimum and supremem, respectively. The proof follows by  induction on $k=d,d-1,\dots,0$ with the inductive hypothesis that for some $N_k\in\mathbb{Z}_+\cup \{\infty\}$ there exists a countable partition $\Gamma^k = \left\{\gamma_i^k\right\}_{i = 0}^{N_k}$ of $\omega_k$ with corresponding faces $F_i^k\in\mathscr{F}(Q)$ satisfying $F_i^k = S(c(\alpha),Q) \, \forall \alpha \in \gamma_i^k$.

Base case ($k = d$): Let $x \in \omega_d$ so that $\sigma(x) \subset \omega_d$. Since $N_{f(x)}$ is $d$-dimensional, $\myint N_{f(x)} = \myrelint N_{f(x)}$ so continuity of $c$ implies that $\sigma(x)$ is a (non-empty) open interval with $m(x) < M(x)$. It follows that $\Gamma^k = \left\{\sigma(x)\mid x \in \omega_d \right\}$ defines a partition of $\omega_d$ into a set of open intervals. Each interval contains (an infinite number) of rational numbers, and we see that $\Gamma^k$ is countable by assigning to each interval a rational number that it contains.

Inductive step: Let $x \in \omega_k \backslash \omega_{k+1}$. There are two cases to consider. If $m(x) < M(x)$ then $(m(x),M(x)) \subset \sigma(x)$ contains a rational number. Thus, the set $\Gamma^k_o = \left\{\sigma(x)\mid x \in \omega_k \backslash \omega_{k+1}, m(x) < M(x)\right\}$ is countable. Otherwise, if $m(x) = x = M(x)$ then by Lemma \ref{lem:nbr} there is a $\delta > 0$ such that if $y\in B(x,\delta)$ then $N_{f(x)}\subset N_{S(y,Q)}$. Continuity of $c$ implies that there is a $\varepsilon > 0$ for which $c((x - \varepsilon, x + \varepsilon)) \subset B(x, \delta)$ and hence $(x - \varepsilon, x + \varepsilon)\backslash \{x\} \subset \omega_{k+1}$. Assigning to $x$ any rational number in $(x - \varepsilon, x + \varepsilon)$ and letting $\Gamma^k_c = \left\{\sigma(x)\mid x \in \omega_k \backslash \omega_{k+1}, m(x) = M(x)\right\}$, we may appeal to the inductive hypothesis to conclude that $\Gamma^k_c$ is countable. Finally, $\Gamma^k = \Gamma^k_o \cup \Gamma^k_c \cup \Gamma^{k+1}$ is a finite union of countable sets and therefore countable.

Since $\omega_0 = [0,1]$ we have shown that $\Gamma = \Gamma^0$ is a countable partition of $[0,1]$ into intervals over which $f$ is constant. Now consider two consecutive intervals $\gamma_i,\gamma_{i+1}\in\Gamma$ and let $M$ be the supremum of $\gamma_i$. If $M\notin\gamma_i$ then since cone $N_{F_i}$ is closed, $c(M) \in N_{F_i}$. Since $c(M) \in \myrelint  N_{F_{i+1}}$ by assumption, it follows that $N_{F_{i+1}}$ is a proper subset of $N_{F_i}$ and hence that $F_i$ is a proper subset of $F_{i+1}$. Otherwise, if $M \in \gamma_i$ then the continuity of $c$ and Lemma \ref{lem:nbr} imply that $N_{F_{i}}$ is a proper subset of $N_{F_{i+1}}$ so $F_{i+1}$ is a proper subset of $F_{i}$. In either case $F_{i}\cap F_{i+1}\neq \emptyset$ and Lemma \ref{lem:dim} implies the dimensionality result of our Theorem.
\end{proof}

\end{document}